\PassOptionsToPackage{table}{xcolor}
\RequirePackage[T1]{fontenc}

\documentclass{article} %
\usepackage{iclr2027_conference,times}

\usepackage{amsmath,amsfonts,bm}

\def\1{\bm{1}}

\DeclareMathAlphabet{\mathsfit}{\encodingdefault}{\sfdefault}{m}{sl}
\SetMathAlphabet{\mathsfit}{bold}{\encodingdefault}{\sfdefault}{bx}{n}

\newcommand{\E}{\mathbb{E}}

\newcommand{\KL}{D_{\mathrm{KL}}}
\newcommand{\Var}{\mathrm{Var}}

\usepackage{enumitem}
\usepackage{hyperref}
\usepackage{url}
\usepackage{graphicx,booktabs,multirow,makecell}
\usepackage{capt-of}
\usepackage[table]{xcolor}
\usepackage{float}
\usepackage{amsthm}

\usepackage[most]{tcolorbox}

\newcommand{\sg}{\operatorname{sg}}
\newcommand{\V}{\mathcal{V}}

\newtheorem{proposition}{Proposition}

\def\eqref#1{Eq.~(\ref{#1})}

\title{%
LEGO-OPD: Factorized Teacher Composition for Multimodal On-Policy Distillation }

\author{%
Jaeyun Shin\textsuperscript{*},
Hangeol Chang\textsuperscript{*} \&
Jong Chul Ye\textsuperscript{}\\
{\normalfont Korea Advanced Institute of Science and Technology (KAIST)}\\
{\normalfont\small
\textsuperscript{*}Equal contribution.}
}

\iclrfinalcopy %
\begin{document}

\maketitle
\lhead{Preprint.}

\begin{abstract}

Multimodal on-policy distillation (OPD) aims to improve visual grounding while preserving the strong reasoning capabilities of language models.
Recent multi-teacher approaches combine LLM and VLM teachers to provide complementary supervision.
However, directly using a VLM's full predictive distribution entangles its visual grounding signal with its own language prior, preventing the grounding information from being transferred independently.
Conversely, increasing the strength of visual supervision can improve perception but may overemphasize visual evidence and degrade language reasoning.
To address this trade-off, we introduce \textbf{LEGO-OPD}, which selectively composes factors from a \textbf{L}anguage \textbf{E}xpert and a \textbf{G}rounding expert into \textbf{O}ne teacher distribution for multimodal OPD.
Under a generalized Bayesian formulation, the language expert provides a prior over candidate tokens, while the grounding expert contributes a visual likelihood that updates this prior, rather than transferring its complete predictive distribution.
This factorized composition allows language reasoning and visual grounding to be controlled independently.
We further introduce adaptive calibration to determine how strongly the visual likelihood should update the language prior at each decoding prefix.
Specifically, LEGO-OPD uses the grounding expert's image-induced prediction shift as a prefix-dependent reference, preventing both insufficient and excessive visual supervision.
Experiments with Qwen3 models show that LEGO-OPD consistently outperforms the evaluated single- and multi-teacher OPD baselines on both multimodal and text-only reasoning tasks.
Moreover, it improves the initial student's visual perception while preserving text-only reasoning.

\end{abstract}

\section{Introduction}
\label{sec:introduction}

On-policy distillation (OPD) is an effective post-training framework for multimodal reasoning in vision-language models (VLMs), distilling teacher predictions at student-generated prefixes~\citep{agarwal2024gkd,yin2026hopd}.
Effective multimodal OPD requires reliable visual grounding and strong reasoning over grounded information.

Recent approaches largely follow two directions.
The first introduces additional specialized teachers to provide complementary supervision, for example by combining external large language models (LLMs) for reasoning with VLM teachers for multimodal guidance~\citep{kwaikeyeteam2026keyevl2,yin2026hopd}.
However, directly combining their complete predictive distributions entangles each teacher's language prior and grounding signal, making it difficult to selectively transfer their complementary strengths.
The second direction strengthens the visual component of the distillation signal to improve grounding~\citep{liu2026vaopd,liang2026vcsd,yoon2026decomposedopd}.
While stronger visual supervision can improve perception, excessive emphasis on visual evidence may distort the underlying language prior and degrade reasoning performance.
Together, these limitations highlight the need for a distillation framework that can separately control language reasoning and visual grounding while combining them into a coherent teacher distribution.

In this paper, we introduce LEGO-OPD, which constructs a single teacher distribution by composing complementary factors from independently selected experts.
Our method consists of two components.
First, we factorize the teacher into a language prior and a visual grounding factor.
An external LLM serves as the language expert and provides a prior distribution over candidate tokens given the textual context, while a VLM serves as the grounding expert and evaluates their compatibility with the image.
This composition admits a generalized Bayesian interpretation in which the language distribution acts as a prior and the visual signal acts as a likelihood that updates the prior into a unified teacher distribution.
Second, we introduce source-referenced calibration to determine how strongly the visual factor should modify the language prior at each prefix.
Existing prediction-space composition methods, such as DExperts, proxy-tuning, and visual contrastive decoding, typically rely on fixed composition coefficients~\citep{liu2021dexperts,liu2024proxytuning,leng2023vcd}.
However, a fixed coefficient can induce substantially different prediction shifts depending on the underlying prior and decoding prefix.
We therefore use the grounding expert's \emph{source KL}, which measures the change in its prediction induced by the image, as a prefix-dependent reference for calibrating the visual update.
The scaling coefficient is adapted so that the magnitude of the visual update is normalized relative to this reference, thereby balancing visual grounding against language-prior supervision.
A global multiplier on the reference further controls the desired degree of visual emphasis.
The composed teacher supervises the student through standard reverse-KL on-policy distillation.

Experiments with Qwen3 models show that LEGO-OPD consistently outperforms the evaluated OPD baselines on both multimodal and text-only reasoning tasks.
Further analysis shows that adaptive calibration strengthens visual grounding while preserving text-only reasoning ability.
Ablation studies confirm that both factorized teacher composition and adaptive calibration are essential to achieving these improvements.

Our contributions are summarized as follows.
\begin{itemize}
    \item \textbf{Factorized teacher composition.}
    We compose a language prior and a visual likelihood estimate from independently selected experts into a single OPD teacher through generalized Bayesian updating.

    \item \textbf{Source-referenced calibration.}
    We adaptively determine the visual update's strength at each prefix, with a global scale controlling the overall visual emphasis.

    \item \textbf{Empirical evaluation.}
    We demonstrate improvements over the evaluated OPD baselines in both multimodal and text-only reasoning, and show that adaptive calibration enables stronger visual perception without sacrificing text-only reasoning performance.
\end{itemize}

\begin{figure}[t!]
    \centering
    \vspace{-0.5cm}
    \includegraphics[width=\linewidth]{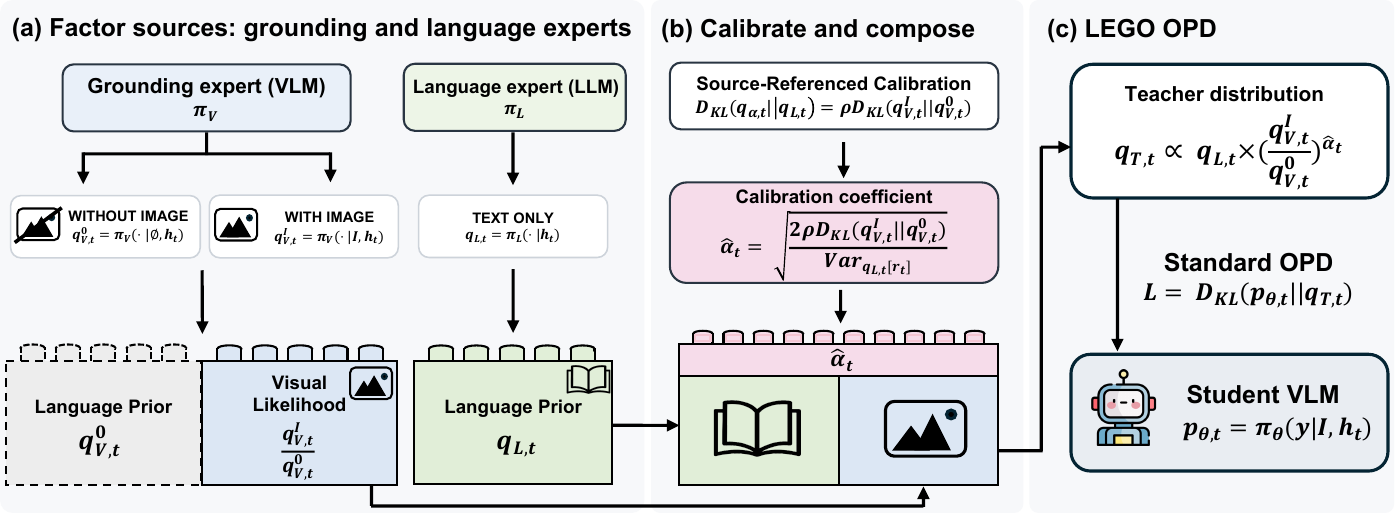}\par
    \caption{
        \textbf{Overview of LEGO-OPD.}
        (a) A grounding expert (VLM) supplies a visual likelihood ratio, while a language expert (LLM) supplies a language prior.
        (b) Source-referenced calibration adjusts the contribution of the visual likelihood ratio.
        (c) The calibrated factors are composed into a single teacher distribution for on-policy distillation of the student VLM.
        }
    \label{fig:method}
\end{figure}

\section{Related Works and Background}
\subsection{Related Work}

\textbf{Heterogeneous Teachers for Multimodal Distillation.}
Recent OPD methods use specialized teachers with complementary strengths,
an approach also explored in multimodal distillation. These methods
differ in the granularity of teacher selection and aggregation. 
For example, M-OPD routes
examples to domain-specific teachers~\citep{ma2026mopd}, and Keye-VL extends
routing across modalities and tasks~\citep{kwaikeyeteam2026keyevl2}. ViGOS
uses differently conditioned self-teachers for perception and reasoning
segments~\citep{wang2026vigos}, while H-OPD combines LLM and VLM predictions
by confidence at each token~\citep{yin2026hopd}.
However, external LLM supervision can require additional
text-only training data or image descriptions
(Appendix~\ref{app:baseline_impl}).
Routing teachers or mixing their complete predictions still entangles
the multimodal teacher's language prior and visual grounding signal,
limiting independent transfer of each expert's strengths.

\textbf{Visual Supervision in Multimodal Distillation.}
A complementary line of multimodal OPD work strengthens the visual
information carried by the supervision signal. VA-OPD uses sensitivity to
visual detail to reweight rollouts and balance supervision across token
groups~\citep{liu2026vaopd}. VCSD refines teacher targets through
original-image and content-erased contrasts, but reports performance
degradation under excessive visual emphasis~\citep{liang2026vcsd}.
Decomposed OPD combines an auxiliary visual objective with a language
preservation loss to mitigate interference from excessive visual
emphasis~\citep{yoon2026decomposedopd}. These findings highlight the
importance of balancing visual and language-prior supervision.

\textbf{Composition in Prediction Space.}
Prediction-space composition offers a way to transfer signals across models.
DExperts uses expert and anti-expert predictions~\citep{liu2021dexperts},
proxy-tuning transfers prediction shifts from smaller tuned
models~\citep{liu2024proxytuning}, and UniR adds learned reasoning-module
logits to a frozen LLM~\citep{kim2026unir}. Visual contrastive decoding
similarly amplifies prediction differences across image
conditions~\citep{leng2023vcd}. However, these methods use fixed composition
coefficients, leaving the choice of suitable composition strengths to
predefined settings or empirical tuning.

LEGO-OPD composes complementary expert factors instead of complete
predictions. Source-referenced calibration selects composition strength
by adapting the visual update at each prefix.

\subsection{Generalized Bayesian Updating}
\label{sec:generalized-bayes}

Generalized Bayesian updating incorporates new information into a prior
without requiring a complete likelihood model for the observations.
The framework of \citet{bissiri2016belief} represents this information
through an observation-dependent loss $L$.
We use the equivalent evidence-score convention $s(v):=-L(v)$,
where higher scores indicate greater support from the observed information.

For a strictly positive prior $p_0$ over a finite candidate space $\V$, a finite
evidence score $s$, and an evidence scale $\alpha\geq0$, the update is
\begin{equation}
q_\alpha^\star
=\arg\max_{q\in\Delta_{\V}}
\left\{\alpha\E_{v\sim q}[s(v)]-\KL(q\|p_0)\right\},
\label{eq:generalized-update}
\end{equation}
where $\Delta_{\V}$ is the set of probability distributions over $\V$. Its unique solution is
\begin{equation}
q_\alpha^\star(v)
=\frac{p_0(v)\exp\{\alpha s(v)\}}
{\sum_{u\in\V}p_0(u)\exp\{\alpha s(u)\}}.
\label{eq:generalized-solution}
\end{equation}
The coefficient $\alpha$ specifies the influence of the evidence relative
to the prior. When $s(v)$ is the log-likelihood of the observations under
a specified probabilistic model and $\alpha=1$, the update recovers that
model's Bayesian posterior. More generally, $s$ can encode information
without specifying a complete sampling distribution.

\subsection{On-Policy Distillation}
\label{sec:opd-background}

On-policy distillation supervises a student policy $\pi_\theta$ at
student-generated prefixes \citep{agarwal2024gkd}.
For $(x,I)\sim\mathcal D$, let $h_t$ contain the system prompt,
question, and response prefix $y_{<t}$.
Write $p_{\theta,t}(v):=\pi_\theta(v\mid I,h_t)$ and $q_t(v)$
for the student and teacher distributions over $\V$ at this prefix.
At update $k$, reverse-KL OPD minimizes
\begin{equation}
\mathcal L_k(\theta)
=
\E_{\substack{
(x,I)\sim\mathcal D\\
y\sim\pi_{\theta_k}(\cdot\mid x,I)
}}
\left[
\frac{1}{|y|}\sum_{t=1}^{|y|}
\KL\!\left(p_{\theta,t}\,\middle\|\,q_t\right)
\right],
\label{eq:opd-loss}
\end{equation}
where $|y|$ is the response length.
Prefixes and teacher targets are fixed during each update.

\section{Method}
\label{sec:method}

\subsection{LEGO-OPD: Multimodal OPD with Factorized Teacher Composition}

\textbf{Notation.}
Let $\pi_L$ and $\pi_V$ denote the frozen policies of an independently
selected language expert (LLM) and grounding expert (VLM), respectively.
For a question--image pair $(x,I)$, let $h_t$ contain the system prompt,
question, and student-generated prefix $y_{<t}$.
Both experts share the token vocabulary $\V$ and are evaluated at
the same textual context. For $v\in\V$, define
$q_{L,t}(v):=\pi_L(v\mid h_t)$,
$q_{V,t}^{I}(v):=\pi_V(v\mid I,h_t)$, and
$q_{V,t}^{0}(v):=\pi_V(v\mid\emptyset,h_t)$,
where $\emptyset$ denotes the no-image condition.
All three distributions are obtained by applying softmax to finite
logits over the finite vocabulary $\V$. %

\textbf{Overall Framework.}
Figure~\ref{fig:method} illustrates our LEGO-OPD  for on-policy VLM distillation using factorized teacher composition.
A grounding expert (VLM) supplies a visual likelihood ratio, while a language expert (LLM) supplies a language prior.
Additionally, source-referenced calibration adjusts the contribution of the visual likelihood ratio, while
the calibrated factors are composed into a single teacher distribution for on-policy distillation of the student VLM.

Specifically, let $z_{L,t}$, $z_{V,t}^{I}$, and $z_{V,t}^{0}$ denote logits whose
softmaxes give $q_{L,t}$, $q_{V,t}^{I}$, and $q_{V,t}^{0}$,
respectively.
Using the adaptive coefficient from
\eqref{eq:local-coefficient}, which will be explained soon,  the composition yields the final
teacher distribution
\begin{equation}
q_{T,t}
:=
q_{\widehat\alpha_t,t}
=
\operatorname{softmax}_{\V}
\left(
z_{L,t}
+
\widehat\alpha_t
\left[z_{V,t}^{I}-z_{V,t}^{0}\right]
\right).
\label{eq:final-teacher}
\end{equation}
Branch-specific log-normalizers cancel in the final softmax, making the
logit- and probability-space compositions equivalent.
For numerical stability, the implementation uses a stabilized residual
and regularized coefficient estimates, with statistics computed over
high-probability tokens to limit sensitivity to tail predictions.
Details are in Appendix~\ref{sec:implementation}.

Finally, let $\pi_\theta$ denote the multimodal student policy, with next-token
distribution $p_{\theta,t}(v):=\pi_\theta(v\mid I,h_t)$ at each
student-generated prefix.
The language expert $\pi_L$ and grounding expert $\pi_V$ provide
the factors used to construct $q_{T,t}$ in
\eqref{eq:final-teacher}.
We use this composed teacher distribution as the target for the
reverse-KL OPD objective in \eqref{eq:opd-loss}, holding
the sampled prefixes and teacher targets fixed during each
student update.

\subsection{LEGO-OPD: Theoretical Background} %

In this section, we provide theoretical background how we come up with the
factored teacher \eqref{eq:final-teacher} as our OPD target in our LEGO-OPD.

\textbf{Bayesian Factorization and Cross-Expert Sourcing.}
For a joint distribution over the next token and image, Bayes' rule
decomposes a multimodal prediction into two complementary factors,
\begin{equation}
P(v\mid I,h_t)
\propto
\underbrace{P(v\mid h_t)}_{\text{language prior}}
\underbrace{P(I\mid v,h_t)}_{\text{visual likelihood}}.
\label{eq:bayes}
\end{equation}
The language prior supports reasoning based on the textual context.
The visual likelihood supports grounding by reweighting tokens
based on their compatibility with the image.
We source the two factors from independently selected experts to draw
on their complementary strengths in reasoning and grounding.
The language expert $\pi_L$ provides the language prior $q_{L,t}$
as its next-token distribution conditioned on the textual context $h_t$.

Although the autoregressive grounding expert $\pi_V$ cannot directly
evaluate $P(I\mid v,h_t)$, Bayes' rule (\eqref{eq:bayes}) gives
\begin{equation}
P(I\mid v,h_t)
\propto
\frac{P(v\mid I,h_t)}{P(v\mid h_t)}.
\label{eq:bayes-ratio}
\end{equation}
To estimate the visual likelihood factor, we use the ratio of the
grounding expert's image-conditioned and image-free predictions,
following Decomposed OPD~\citep{yoon2026decomposedopd}.
The estimate and its logarithm are defined as follows.
\begin{equation}
\widehat\ell_t(v):=\frac{q_{V,t}^{I}(v)}{q_{V,t}^{0}(v)},
\qquad
r_t(v):=\log\widehat\ell_t(v)
=\log q_{V,t}^{I}(v)-\log q_{V,t}^{0}(v).
\label{eq:residual}
\end{equation}

In particular, \(q_{V,t}^{0}\) is obtained by removing the image input rather than marginalizing over images. The ratio in \eqref{eq:residual} therefore need not correspond to the likelihood term of an underlying joint distribution. Instead, it measures the image-induced change in the grounding expert's own predictive distribution and is inherently model-dependent. We thus regard \(\widehat{\ell}_t\) as a proxy for visual evidence, which we refer to as the visual likelihood estimate, and its logarithm \(r_t\) as the visual residual.

\textbf{Generalized Bayesian Teacher Composition.}
We obtain the language prior $q_{L,t}$ and the visual likelihood
estimate $\widehat{\ell}_t$ from independently selected experts.
Their normalized product is therefore not guaranteed to represent
an exact Bayesian posterior. We therefore adopt generalized Bayesian updating, as described in Section~\ref{sec:generalized-bayes}, to update the language prior using the evidence provided by the visual likelihood estimate.

More specifically, we use the visual residual $r_t(v)=\log\widehat\ell_t(v)$ as
the evidence score. Higher values indicate greater relative support
for a candidate token in the presence of the image.
At each student-generated prefix, substituting $p_0=q_{L,t}$ and
$s=r_t$ into \eqref{eq:generalized-update} defines the teacher
distribution for an evidence scale $\alpha_t\geq0$,
\begin{equation}
q_{\alpha,t}:=
\arg\max_{q\in\Delta_{\V}}
\left\{
\alpha_t\E_{v\sim q}[r_t(v)]-\KL(q\|q_{L,t})
\right\}.
\label{eq:teacher-objective}
\end{equation}
The objective balances the expected evidence score against KL
deviation from the language prior.
The unique solution for \eqref{eq:teacher-objective} is
\begin{equation}
q_{\alpha,t}(v)
=
\frac{q_{L,t}(v)\exp\{\alpha_t r_t(v)\}}{Z_t(\alpha)},
\qquad
Z_t(\alpha):=\sum_{u\in\V}q_{L,t}(u)\exp\{\alpha_t r_t(u)\},
\label{eq:family}
\end{equation}
where $Z_t(\alpha)$ is the normalizing constant.
By plugging in \eqref{eq:residual} into \eqref{eq:family}, our teacher assumes the
specific form:
\begin{align}
q_{\alpha,t}(v)
=\frac{1}{Z_t(\alpha)} q_{L,t}(v)\left(\frac{q_{V,t}^{I}(v)}{q_{V,t}^{0}(v)}\right)^{\alpha_t} %
\end{align}
whose logit expression leads to the final form \eqref{eq:final-teacher} of our factored composite teacher.
Accordingly, the generalized Bayesian posterior defines the teacher
distribution by combining the language prior from the language expert
with the visual likelihood from the grounding expert.
Any candidate-independent multiplicative constant in the visual
likelihood cancels after normalization.

The evidence scale $\alpha_t$ determines the strength of the visual
update relative to the language prior.
We next describe how $\alpha_t$ is adaptively selected at each
student-generated prefix.

\textbf{Source-Referenced Calibration.}
We interpret $\alpha_t$ as a calibration coefficient for composing
the language prior and visual likelihood estimate.
Motivated by the limitations of a fixed coefficient illustrated
in Appendix~\ref{app:toy_example}, we use the grounding expert's
image-induced prediction shift as a reference for selecting
$\alpha_t$ at each prefix.
This shift is measured by the \emph{source KL},
$\KL(q_{V,t}^{I}\|q_{V,t}^{0})$.
More specifically, at each prefix, we choose $\alpha_t$ so that the composed teacher's
KL divergence from the language prior matches the scaled source KL:
\begin{equation}
\KL(q_{\alpha_t,t}\|q_{L,t})
=
\rho\,\KL(q_{V,t}^{I}\|q_{V,t}^{0}),
\label{eq:match}
\end{equation}
where $\rho\geq0$ is the KL-budget scale shared across prefixes.
For a nonconstant residual,
$\KL(q_{\alpha,t}\|q_{L,t})$ is continuous and strictly increasing
in $\alpha>0$, so \eqref{eq:match} admits a unique finite
positive solution whenever the requested budget lies within its
attainable range
(Appendix~\ref{sec:existence}).

Solving for this coefficient exactly would require a one-dimensional
root search at every generated position. We therefore locally
approximate $\KL(q_{\alpha,t}\|q_{L,t})$ around $\alpha=0$, where the
composed teacher equals the language prior. A second-order Taylor
expansion gives
\begin{equation}
\KL(q_{\alpha,t}\|q_{L,t})
=
\tfrac12
\Var_{q_{L,t}}[r_t]\alpha^2
+
O(\alpha^3).
\label{eq:quadratic-calibration}
\end{equation}
Matching the quadratic term to the KL budget gives the
approximate coefficient
\begin{equation}
\widehat\alpha_t
:=
\sqrt{
\frac{
2\rho\KL(q_{V,t}^{I}\|q_{V,t}^{0})
}{
\Var_{q_{L,t}}[r_t]
}
}.
\label{eq:local-coefficient}
\end{equation}
For a fixed KL budget, this approximation assigns a smaller coefficient
when the residual varies more strongly under the receiving language
prior. We set $\widehat\alpha_t=0$ when the variance is zero.
Because the approximation is local, the achieved KL may differ from
the requested budget for finite updates.
The derivation and approximation properties are detailed in Appendix~\ref{sec:alpha0}.

\begin{figure}[!t]
\centering
\setlength{\abovecaptionskip}{3pt}
\begin{minipage}[t]{0.49\linewidth}
\centering
\includegraphics[width=\linewidth]{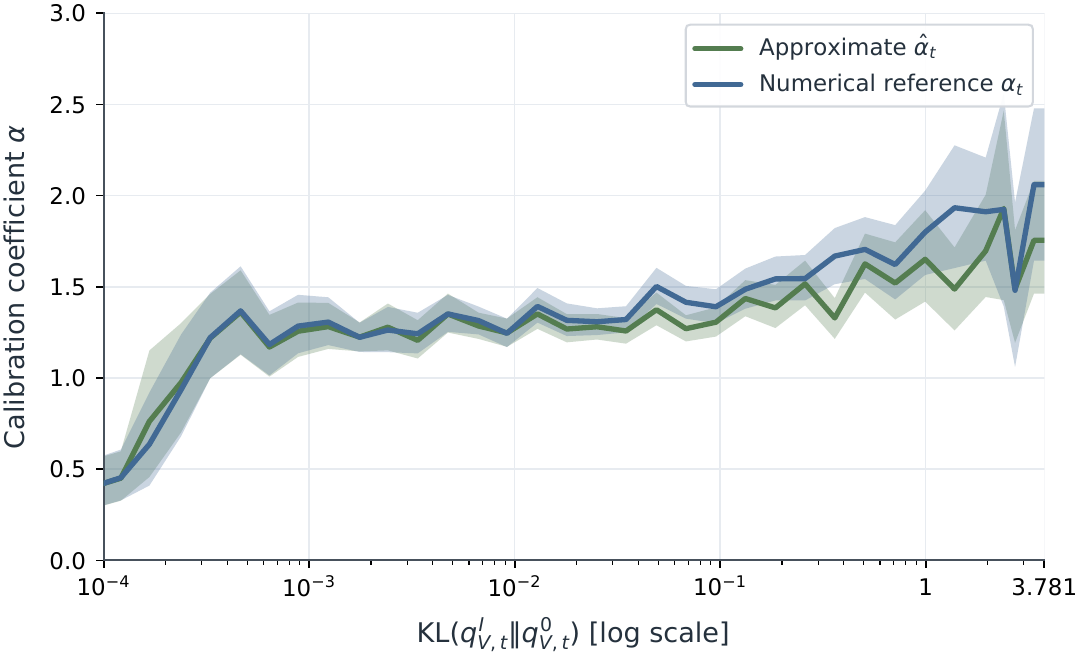}
\par\vspace{-3pt}
{\small (a) Calibration coefficient}
\end{minipage}
\hfill
\begin{minipage}[t]{0.49\linewidth}
\centering
\includegraphics[width=\linewidth]{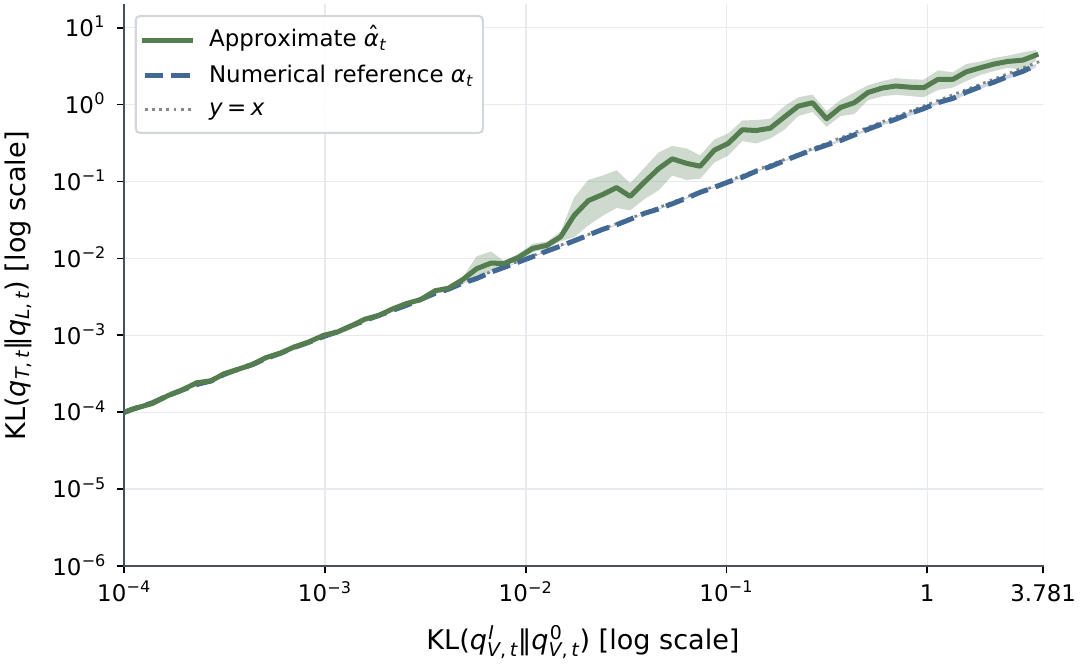}
\par\vspace{-3pt}
{\small (b) Induced teacher KL}
\end{minipage}
\caption{
\textbf{Local approximation error in expert composition.}
Curves show means within source-KL bins at $\rho=1$. The dotted $y=x$ line denotes
source-KL matching. Shaded bands indicate question-bootstrap 95\% confidence intervals.
}
\label{fig:calibration_accuracy}
\end{figure}

To assess the local approximation, we compute calibration coefficients
using the approximation and a numerical KL-matching solver on shared
prefixes for the 4B language and grounding experts.
We use 24,576 shared prefixes from 768 rollouts on 256
MMMU~\citep{yue2024mmmu} questions, using student checkpoints at steps
0, 100, and 200. Experimental details are provided in Appendix~\ref{app:calibration-diagnostics}.
Figure~\ref{fig:calibration_accuracy}(a) shows that the mean approximate
coefficient broadly tracks the numerical solution across source-KL
intervals, with larger deviations at larger source KL.
Figure~\ref{fig:calibration_accuracy}(b) shows that the mean teacher KL
under $\widehat\alpha_t$ closely matches the source KL for small updates
but exceeds it for larger updates. This is consistent with the local
second-order approximation (Equation~\ref{eq:quadratic-calibration}). 
We further examine the effects of approximation error in a practical distillation setting in Appendix~\ref{app:recomposed-training}.

\noindent\textbf{Loss Decomposition and its Interpretation.}
To examine the teacher factors' contributions to the OPD objective,
we express the student's prediction in a factorized form analogous to
the teacher composition.
Let $p_{\theta,t}^{0}(v):=\pi_\theta(v\mid\emptyset,h_t)$
denote the student's language prior and
$r_t^S(v):=\log p_{\theta,t}(v)-\log p_{\theta,t}^{0}(v)$
its visual residual. Thus,
$p_{\theta,t}(v)=p_{\theta,t}^{0}(v)\exp\{r_t^S(v)\}$.
Substituting this factorization and the teacher distribution
in \eqref{eq:family} into the reverse-KL objective gives
\begin{equation}
\KL(p_{\theta,t}\|q_{T,t})
=
\underbrace{
\E_{p_{\theta,t}}
\left[\log\frac{p_{\theta,t}^{0}(v)}{q_{L,t}(v)}\right]
}_{\text{language-prior alignment}}
+
\underbrace{
\E_{p_{\theta,t}}
\left[r_t^S(v)-\widehat\alpha_t r_t(v)\right]
}_{\text{visual-likelihood alignment}}
+
\log Z_t(\widehat\alpha_t).
\label{eq:factorized-loss}
\end{equation}
The first and second terms capture the language-prior and
visual-residual contributions to student--teacher alignment,
respectively, with the grounding expert's residual scaled by
$\widehat\alpha_t$.
The final term normalizes the teacher distribution and is
independent of the student parameters at a fixed prefix.
\eqref{eq:factorized-loss} thus expresses student--teacher
alignment through language-prior and visual-residual contributions
within a single reverse-KL objective.

\section{Experiments}
\label{sec:experiments}

\subsection{Experimental Setup}
\label{sec:experimental-setup}
\noindent\textbf{Models and training data.}
For LEGO-OPD, we use Qwen3-VL-2B-Instruct as the student,
Qwen3-4B-Instruct-2507 as the language expert, and
Qwen3-VL-4B-Instruct as the grounding expert
\citep{yang2025qwen3,bai2025qwen3vl}.
We use the 54,840-example subset of MMFineReason-123K
released by H-OPD as the multimodal training data
\citep{lin2026mmfinereason,yin2026hopd}.
This subset supports comparison with H-OPD and includes
the image descriptions required by its language teacher.

\noindent\textbf{Training.}
All experiments, including ablations, use a common training
implementation, student initialization, and optimization
settings, except for method-specific components.
All student parameters are updated for 200 steps using AdamW
\citep{loshchilov2019adamw} with learning rate $10^{-6}$,
no warmup, and global batch size 128.
Prompt and response lengths are limited to 12,384 tokens.
Rollouts use temperature 1.0 without top-$p$ or top-$k$ truncation.
We follow the top-16 OPD setting of \citet{li2026rethinkingopd}.
Implementation details are in Appendices ~\ref{sec:implementation} and ~\ref{app:training_settings}.

\noindent\textbf{Evaluation.}
We evaluate multimodal reasoning, text-only reasoning, and visual
perception separately, as summarized in Table~\ref{tab:qwen3vl_main}.
The multimodal reasoning benchmarks comprise
MATH-Vision \citep{wang2024mathvision},
MathVerse \citep{zhang2024mathverse},
MMMU-Pro \citep{yue2024mmmupro},
MathVista \citep{lu2024mathvista},
RBench-M \citep{guo2025rbench}, and
OlympiadBench-MM \citep{he2024olympiadbench}.
For text-only evaluation, we use
MATH-500 \citep{lightman2023verify},
MathVerse-Text Only \citep{zhang2024mathverse},
MMLU-Pro \citep{wang2024mmlupro},
RBench-T \citep{guo2025rbench},
OlympiadBench-Text \citep{he2024olympiadbench}.
To assess visual perception separately from reasoning performance,
we evaluate BLINK~\citep{fu2024blink} and
QBench1,2~\citep{wu2023qbench,zhang2024qbenchplus}.
For QBench1,2, we pool the perception-focused Group A questions
from both benchmarks (1,495 + 1,000 questions) and report accuracy
over the combined set.
Except for perception, we average five samples per question.
Category averages are unweighted benchmark means.
Evaluation details are provided in Appendix~\ref{app:evaluation}.

\noindent\textbf{Baselines.}
We compare LEGO-OPD with standard OPD, VA-OPD~\citep{liu2026vaopd},
M-OPD~\citep{ma2026mopd}, and H-OPD~\citep{yin2026hopd}.
For controlled comparison, we match the model scale, optimization settings,
and response budget whenever compatible with each method, while preserving
its defining supervision structure.
M-OPD and H-OPD use both VLM and LLM teachers,
whereas standard OPD and VA-OPD rely only on a VLM teacher.
Since M-OPD is formulated for domain-routed text distillation,
we route our multimodal training data to the VLM teacher
and DeepMath to the LLM teacher.
VA-OPD retains its multi-response sampling scheme, generating four responses
per prompt and therefore observing fewer unique training prompts under the
same response budget.
Further implementation details are provided in
Appendix~\ref{app:baseline_impl}.

\begin{table}[H]
\centering
\caption{
Results on multimodal reasoning, perception, and text reasoning. Bold marks the best distilled-student score.
Evaluation details and SDs are in Appendices~\ref{app:evaluation} and ~\ref{app:seed-results}.
}
\label{tab:qwen3vl_main}

\setlength{\tabcolsep}{3pt}
\renewcommand{\arraystretch}{1.15}
\renewcommand{\theadfont}{\normalsize\bfseries}
\renewcommand{\theadgape}{\Gape[1pt]}
\newcommand{\teachercell}[1]{{\footnotesize\makecell[c]{#1}}}

\resizebox{\linewidth}{!}{%
\begin{tabular}{@{}l*{8}{c}@{}}
\toprule
\noalign{\vskip -1pt}
& \thead{Teacher}
& \thead{Initial\\Student}
& \multicolumn{4}{c}{\textbf{Distillation Baselines}}
& \multicolumn{2}{c}{\textbf{Our Method}} \\
\cmidrule(lr){2-2}
\cmidrule(lr){3-3}
\cmidrule(lr){4-7}
\cmidrule(lr){8-9}
\noalign{\vskip -1pt}

\textbf{Model / Method}
& \thead{Qwen3-VL\\4B-Instruct}
& \thead{Qwen3-VL\\2B-Instruct}
& \thead{Standard\\OPD}
& \thead{VA-OPD}
& \thead{M-OPD}
& \thead{H-OPD}
& \thead{LEGO-OPD\\$\rho=1$}
& \thead{LEGO-OPD\\$\rho=2$} \\
\cmidrule(lr){4-7}
\cmidrule(lr){8-9}
\noalign{\vskip -1pt}

{\footnotesize\textbf{Teacher}}
& \teachercell{--}
& \teachercell{--}
& \teachercell{Qwen3-VL-4B}
& \teachercell{Qwen3-VL-4B}
& \teachercell{Qwen3-VL-4B\\+ Qwen3-4B}
& \teachercell{Qwen3-VL-4B\\+ Qwen3-4B}
& \teachercell{Qwen3-VL-4B\\+ Qwen3-4B}
& \teachercell{Qwen3-VL-4B\\+ Qwen3-4B} \\

\midrule
\multicolumn{9}{c}{\textbf{Multimodal Reasoning}} \\
\midrule
MATH-Vision
& 39.74 & 27.96 & 27.43 & 25.33 & 24.41 & 32.37 & 33.36 & \textbf{34.28} \\
MathVerse (Total)
& 66.70 & 51.51 & 52.70 & 51.66 & 51.28 & 55.41 & 56.96 & \textbf{57.19} \\
MathVista
& 76.26 & 66.56 & 68.60 & 67.36 & 66.42 & 67.70 & 69.62 & \textbf{70.58} \\
MMMU-Pro
& 50.29 & 37.40 & 36.98 & 35.11 & 35.06 & 39.36 & 40.98 & \textbf{41.38} \\
RBench-M
& 32.81 & 19.61 & 16.90 & 16.81 & 15.67 & 26.92 & 29.17 & \textbf{29.20} \\
OlympiadBench-MM
& 63.20 & 37.60 & 39.60 & 37.33 & 36.53 & 39.07 & 44.93 & \textbf{45.07} \\
\midrule
\textbf{Multimodal Avg.}
& 54.83 & 40.11 & 40.37 & 38.93 & 38.23 & 43.47 & 45.84 & \textbf{46.28} \\

\midrule
\multicolumn{9}{c}{\textbf{Perception}} \\
\midrule
BLINK
& 63.49 & 51.34 & 52.13 & \textbf{52.81} & 51.87 & 51.76 & 52.03 & 52.55 \\
QBench1,2
& 76.63 & 69.18 & 69.54 & \textbf{69.70} & 68.86 & 68.98 & 69.34 & 69.62 \\

\midrule
\multicolumn{9}{c}{\textbf{Text Reasoning}} \\
\midrule
MATH-500
& 93.04 & 80.44 & 78.72 & 80.20 & 80.56 & 81.88 & \textbf{82.40} & 82.36 \\
MathVerse (Text Only)
& 66.80 & 57.18 & 57.44 & 56.65 & 56.19 & 59.44 & 59.59 & \textbf{60.23} \\
MMLU-Pro
& 62.90 & 43.59 & 45.52 & 45.09 & 44.98 & 46.94 & 47.15 & \textbf{47.54} \\
RBench-T
& 51.92 & 30.20 & 28.59 & 28.76 & 28.28 & 37.17 & 38.59 & \textbf{40.27} \\
OlympiadBench-Text
& 69.41 & 48.04 & 48.31 & 48.99 & 47.77 & 52.40 & 52.31 & \textbf{53.00} \\
\midrule
\textbf{Text Avg.}
& 68.81 & 51.89 & 51.72 & 51.94 & 51.56 & 55.57 & 56.01 & \textbf{56.68} \\
\bottomrule
\end{tabular}%
}
\end{table}

\subsection{Main Results}

LEGO-OPD at both $\rho=1$ and $\rho=2$ outperforms the evaluated
OPD baselines in multimodal and text-only reasoning averages
(Table~\ref{tab:qwen3vl_main}). Both settings outperform H-OPD
on all six multimodal reasoning benchmarks using the same expert pair.
Increasing $\rho$ from 1 to 2 further improves both reasoning averages
and perception scores. These results support factor composition with
adaptive calibration as an effective way to transfer the experts'
complementary strengths. The performance advantage also holds under
greedy decoding, with the highest multimodal and text-only reasoning
averages among the evaluated 2B students , as reported in
Appendix~\ref{app:greedy-results}.

Table~\ref{tab:calibration_ablation} compares fixed-coefficient
composition with adaptive calibration using the same student,
expert pair, and training settings.
Fixed composition at $\alpha=2$
approaches H-OPD's %
performance
(Tables~\ref{tab:qwen3vl_main} and~\ref{tab:calibration_ablation}).
However, the best fixed coefficient varies across benchmarks,
leaving its choice to empirical tuning.
By contrast, adaptive calibration at $\rho=1$ outperforms all tested
fixed coefficients in both reasoning averages without empirically
tuning a fixed composition coefficient.
Moreover, the gains are larger in multimodal than in text-only reasoning.
This may reflect the need to balance visual grounding with language-based
reasoning. Adaptive calibration can strengthen relevant visual supervision
while preserving language-prior supervision, benefiting 
multimodal reasoning.

With fixed coefficients, increasing $\alpha$ improves perception
but reduces text-only reasoning performance.
Increasing $\alpha$ from 2 to 4 also produces a marked decline in
the multimodal reasoning average.
This pattern suggests that excessive visual emphasis weakens
language-prior supervision.
Related effects have been reported as interference between visual
and language objectives in Decomposed OPD~\citep{yoon2026decomposedopd}
and performance degradation under excessive contrastive shaping
in VCSD~\citep{liang2026vcsd}.
In contrast, adaptive calibration limits changes to the language prior
where the source KL is small.
This helps preserve language-prior supervision even under stronger
visual emphasis.
Across $\rho\in\{1,2,4\}$, the larger budgets improve perception
relative to $\rho=1$ while maintaining or improving both reasoning
averages. This pattern supports adaptive calibration as a way to
increase visual emphasis while preserving reasoning performance.
Appendix~\ref{app:zero-budget-controls} additionally describes $\rho=0$ setting.

\setlength{\textfloatsep}{18pt plus 1pt minus 1pt}

\begingroup
\setlength{\intextsep}{6pt}
\definecolor{fixedgreen}{HTML}{547D50}
\definecolor{adaptiveblue}{HTML}{416994}
\newcommand{\fixedbest}[1]{\textcolor{fixedgreen}{\textbf{#1}}}
\newcommand{\adaptivebest}[1]{\textcolor{adaptiveblue}{\textbf{#1}}}

\begin{table}[!t]
\centering
\setlength{\abovecaptionskip}{0pt}
\setlength{\belowcaptionskip}{2pt}
\caption{\textbf{Adaptive calibration and the KL-budget scale.}
Generalized Bayesian composition with fixed coefficients
or adaptive calibration.
Best scores per group are
\fixedbest{green} (fixed) and \adaptivebest{blue} (adaptive).
Reasoning standard deviations are in Appendix~\ref{app:calibration-results}.}
\label{tab:calibration_ablation}
\footnotesize
\setlength{\tabcolsep}{3pt}
\renewcommand{\arraystretch}{0.95}
\setlength{\aboverulesep}{1pt}
\setlength{\belowrulesep}{1pt}
\resizebox{0.95\linewidth}{!}{%
\begin{tabular*}{\textwidth}{@{\extracolsep{\fill}}l*{6}{c}@{}}
\toprule
& \multicolumn{3}{c}{\textbf{Fixed coefficient}}
& \multicolumn{3}{c}{\textbf{Adaptive}} \\
\cmidrule(lr){2-4}\cmidrule(lr){5-7}
\textbf{Benchmark} & $\alpha=1$ & $\alpha=2$ & $\alpha=4$
& $\rho=1$ & $\rho=2$ & $\rho=4$ \\
\midrule
\multicolumn{7}{c}{\textbf{Multimodal Reasoning}} \\
\midrule
MATH-Vision & \fixedbest{31.91} & 31.25 & 30.00 & 33.36 & \adaptivebest{34.28} & 34.01 \\
MathVerse (Total) & 53.46 & \fixedbest{54.29} & 53.41 & 56.96 & 57.19 & \adaptivebest{57.51} \\
MathVista & 64.58 & 68.98 & \fixedbest{69.36} & 69.62 & \adaptivebest{70.58} & 70.50 \\
MMMU-Pro & \fixedbest{39.48} & 38.87 & 37.45 & 40.98 & \adaptivebest{41.38} & 41.36 \\
RBench-M & \fixedbest{26.89} & 24.03 & 17.62 & 29.17 & \adaptivebest{29.20} & 28.39 \\
OlympiadBench-MM & 38.27 & \fixedbest{41.87} & 41.73 & 44.93 & 45.07 & \adaptivebest{45.33} \\
\midrule
\textbf{Multimodal Avg.} & 42.43 & \fixedbest{43.22} & 41.60 & 45.84 & \adaptivebest{46.28} & 46.18 \\
\midrule
\multicolumn{7}{c}{\textbf{Perception}} \\
\midrule
BLINK & 51.18 & 51.29 & \fixedbest{51.92} & 52.03 & \adaptivebest{52.55} & 52.50 \\
QBench1,2 & 68.46 & 68.86 & \fixedbest{69.66} & 69.34 & 69.62 & \adaptivebest{69.70} \\
\midrule
\multicolumn{7}{c}{\textbf{Text Reasoning}} \\
\midrule
MATH-500 & \fixedbest{82.24} & 82.08 & 81.52 & 82.40 & 82.36 & \adaptivebest{82.52} \\
MathVerse (Text Only) & \fixedbest{59.06} & 58.63 & 57.94 & 59.59 & 60.23 & \adaptivebest{60.46} \\
MMLU-Pro & \fixedbest{47.40} & 47.16 & 47.10 & 47.15 & \adaptivebest{47.54} & 47.38 \\
RBench-T & \fixedbest{38.21} & 35.52 & 32.85 & 38.59 & \adaptivebest{40.27} & 38.08 \\
OlympiadBench-Text & \fixedbest{52.20} & 51.72 & 51.19 & 52.31 & \adaptivebest{53.00} & 51.78 \\
\midrule
\textbf{Text Avg.} & \fixedbest{55.82} & 55.02 & 54.12 & 56.01 & \adaptivebest{56.68} & 56.04 \\
\bottomrule
\end{tabular*}%
}
\end{table}
\endgroup

\subsection{Analysis of Language and Grounding Experts}
\label{sec:expert-contributions}
Table~\ref{tab:expert_contributions} compares the full 4B--4B composition
with variants that replace one expert's factor with that of the frozen
initial student. To focus on reasoning and grounding, the subset includes
multimodal mathematics and challenging reasoning benchmarks alongside
their available text-only counterparts.
Table~\ref{tab:expert_combinations_full} reports the full reasoning benchmark
results and extends the comparison to additional expert combinations.
The experiments use Qwen3-1.7B and Qwen3-4B-Instruct-2507 as language
experts and Qwen3-VL-Instruct models of sizes 2B, 4B, and 8B as grounding
experts. One expert is varied at a time from the default 4B--4B pairing.

\begingroup
\begin{table}[!ht]
\centering
\setlength{\abovecaptionskip}{0pt}
\setlength{\belowcaptionskip}{2pt}
\caption{\textbf{Contributions of the language and grounding experts.}
Student (2B) is the frozen initial Qwen3-VL-2B.
All runs use $\rho=1$.
Parentheses show pp changes from the full composition.}
\label{tab:expert_contributions}
\setlength{\tabcolsep}{5pt}
\renewcommand{\arraystretch}{1.15}
\renewcommand{\theadfont}{\normalsize\bfseries}
\renewcommand{\theadgape}{\Gape[1pt]}
\newcommand{\alignedscore}[2]{%
  \makebox[5.2em][c]{#1\rlap{\hspace{0.15em}{\scriptsize\color{black!60}#2}}}%
}
\resizebox{0.95\textwidth}{!}{%
\begin{tabular}{@{}*{7}{c}@{\hspace{1.6em}}}
\toprule
\multicolumn{2}{c}{\textbf{Teacher composition}}
& \multicolumn{3}{c}{\textbf{Multimodal Reasoning}}
& \multicolumn{2}{c}{\textbf{Text Reasoning}} \\
\cmidrule(lr){1-2}
\cmidrule(lr){3-5}
\cmidrule(lr){6-7}
\thead{Language\\prior}
& \thead{Visual\\residual}
& \thead{MATH-Vision}
& \thead{MathVerse\\(Total)}
& \thead{Olympiad\\Bench-MM}
& \thead{MathVerse\\(Text Only)}
& \thead{Olympiad\\Bench-Text} \\
\midrule
Qwen3-4B & Qwen3-VL-4B
& \alignedscore{\textbf{33.36}}{}
& \alignedscore{\textbf{56.96}}{}
& \alignedscore{\textbf{44.93}}{}
& \alignedscore{59.59}{}
& \alignedscore{52.31}{} \\
\midrule
Qwen3-4B & Student (2B)
& \alignedscore{33.03}{($-0.33$)}
& \alignedscore{54.17}{($-2.79$)}
& \alignedscore{43.07}{($-1.86$)}
& \alignedscore{\textbf{59.92}}{($+0.33$)}
& \alignedscore{\textbf{52.67}}{($+0.36$)} \\
Student (2B) & Qwen3-VL-4B
& \alignedscore{26.58}{($-6.78$)}
& \alignedscore{52.69}{($-4.27$)}
& \alignedscore{37.20}{($-7.73$)}
& \alignedscore{56.98}{($-2.61$)}
& \alignedscore{48.07}{($-4.24$)} \\
\bottomrule
\end{tabular}%
}
\end{table}
\endgroup

A key observation in Table~\ref{tab:expert_contributions} is that the two
experts' benefits combine in the full composition.
The language prior supports reasoning across modalities, while the visual
residual provides additional multimodal gains with little change in
text-only performance. This pattern supports the joint transfer of
language reasoning and visual grounding to the student.

\begingroup
\begin{table}[!t]
\centering
\setlength{\abovecaptionskip}{0pt}
\setlength{\belowcaptionskip}{2pt}
\caption{\textbf{Effect of expert choice on student reasoning.}
One expert is varied from the shaded Base.
All runs use the same 2B student, 200 steps, and $\rho=1$.
Student (2B) denotes the frozen initial student.
When used as the language expert, it receives no image.
Parentheses show average changes from Base in pp (blue increases, red decreases).}
\label{tab:expert_combinations_full}
\definecolor{expertgain}{HTML}{285D9C}
\definecolor{expertloss}{HTML}{B04B4B}
\newcommand{\expertavg}[3]{#1\,{\fontsize{6}{7}\selectfont\textcolor{#2}{($#3$)}}}
\footnotesize
\setlength{\tabcolsep}{3pt}
\renewcommand{\arraystretch}{1.18}
\setlength{\aboverulesep}{0.4pt}
\setlength{\belowrulesep}{0.4pt}
\setlength{\cmidrulesep}{0pt}
\renewcommand{\theadfont}{\footnotesize\bfseries}
\renewcommand{\theadgape}{\Gape[1pt]}
\resizebox{0.95\linewidth}{!}{%
\begin{tabular*}{\textwidth}{@{\extracolsep{\fill}}l*{2}{c}>{\columncolor{black!6}}c*{2}{c}@{}}
\toprule
\textbf{Teacher composition} & \multicolumn{2}{c}{\textbf{Language expert swap}} & \textbf{Base} & \multicolumn{2}{c}{\textbf{Grounding expert swap}} \\
\cmidrule(lr){2-3}\cmidrule(lr){5-6}
Language expert & LLM 1.7B & Student (2B) & \textbf{LLM 4B} & LLM 4B & LLM 4B \\
Grounding expert & VLM 4B & VLM 4B & \textbf{VLM 4B} & Student (2B) & VLM 8B \\
\midrule
\multicolumn{6}{c}{\textbf{Multimodal Reasoning}} \\
\midrule
MATH-Vision & 28.36 & 26.58 & 33.36 & 33.03 & 34.34 \\
MathVerse (Total) & 43.89 & 52.69 & 56.96 & 54.17 & 57.81 \\
MathVista & 64.96 & 68.20 & 69.62 & 69.42 & 70.24 \\
MMMU-Pro & 35.11 & 36.36 & 40.98 & 41.93 & 41.70 \\
RBench-M & 22.71 & 17.53 & 29.17 & 28.48 & 28.90 \\
OlympiadBench-MM & 36.00 & 37.20 & 44.93 & 43.07 & 45.20 \\
\midrule
\textbf{Multimodal Avg.} & \expertavg{38.51}{expertloss}{-7.33} & \expertavg{39.76}{expertloss}{-6.08} & 45.84 & \expertavg{45.02}{expertloss}{-0.82} & \expertavg{46.37}{expertgain}{+0.53} \\
\midrule
\multicolumn{6}{c}{\textbf{Text Reasoning}} \\
\midrule
MATH-500 & 81.52 & 79.60 & 82.40 & 82.36 & 84.08 \\
MathVerse (Text Only) & 55.03 & 56.98 & 59.59 & 59.92 & 59.59 \\
MMLU-Pro & 43.14 & 44.91 & 47.15 & 47.51 & 46.92 \\
RBench-T & 33.05 & 29.87 & 38.59 & 38.46 & 37.61 \\
OlympiadBench-Text & 47.57 & 48.07 & 52.31 & 52.67 & 53.06 \\
\midrule
\textbf{Text Avg.} & \expertavg{52.06}{expertloss}{-3.95} & \expertavg{51.89}{expertloss}{-4.12} & 56.01 & \expertavg{56.18}{expertgain}{+0.17} & \expertavg{56.25}{expertgain}{+0.24} \\
\bottomrule
\end{tabular*}%
}
\end{table}
\endgroup

The expanded results in Table~\ref{tab:expert_combinations_full} clarify
the effects of changing each expert.
With the 4B grounding expert fixed, replacing the base 4B LLM with the
1.7B LLM lowers both text-only and multimodal reasoning scores.
Conversely, with the 4B language expert fixed, increasing the grounding
model from 2B to 4B to 8B improves the multimodal reasoning average,
while text-only averages remain nearly unchanged.
Settings and standard deviations are in Appendix~\ref{app:additional-experts}.

\section{Conclusion}

We presented \textbf{LEGO-OPD}, which composes a language prior and a visual grounding signal into a unified teacher for multimodal on-policy distillation.
A generalized Bayesian formulation enables factorized teacher composition, while source-referenced calibration adaptively controls the visual contribution at each student-generated prefix.
A local KL approximation yields a closed-form calibration coefficient, with its approximation error assessed against a numerical KL-matching reference.
Across Qwen3 models, LEGO-OPD achieves the best average performance among the evaluated single- and multi-teacher OPD baselines on both multimodal and text-only reasoning.
Ablations show that the language prior and visual residual provide complementary benefits, and that adaptive calibration outperforms fixed weighting in both reasoning averages while allowing stronger visual emphasis without degrading reasoning.
Overall, LEGO-OPD provides an effective framework for balancing language reasoning and visual grounding in multimodal distillation. Limitations are discussed in Appendix~\ref{sec:limitations}.

\section*{Reproducibility statement}
Section~\ref{sec:experimental-setup} describes the models, training data,
optimization settings, and evaluation benchmarks used in our experiments.
Appendix~\ref{app:proofs} provides the theoretical analysis and proofs,
and Appendix~\ref{sec:implementation} specifies the practical teacher
construction and student update of LEGO-OPD.
Appendix~\ref{app:experimental-details} details the training settings,
evaluation protocols, prompt templates, and baseline implementations,
and Appendix~\ref{app:practical-validation} gives the configurations
for the calibration ablations and diagnostics.
The code is available at \url{https://anonymous.4open.science/r/LEGO_OPD-51C1}.

\section*{AI use statement}
We used generative AI tools as interactive research assistants throughout this work. Our use was mainly iterative. We used them to explore alternative formulations, examine our reasoning, identify ambiguities or potential weaknesses, and obtain feedback as the work developed. They also helped broaden our literature review, refine translation and English scientific writing, and accelerate implementation and debugging.

Generative AI tools also supported discussion of theoretical formulations and mathematical derivations, experimental design, result interpretation, and code development. These tools informed the research process, while the authors set the research direction, made the methodological and experimental decisions, and determined the final interpretations. AI-assisted suggestions were treated as material for further examination rather than authoritative outputs. Mathematical arguments were checked by the authors, AI-assisted implementations were manually reviewed and tested against the intended methods and experimental settings, and relevant claims and citations were verified against their original sources. The authors repeatedly revised the manuscript using AI tools for discussion, critique, translation, and editing while continuing to develop and examine the underlying arguments themselves. Generative AI was also used as a documented part of the evaluation pipeline for semantic answer interpretation, as described in Appendix ~\ref{app:evaluation}.

\bibliographystyle{iclr2027_conference}
\bibliography{fusion_opd_references_checked}

\begin{thebibliography}{32}
\providecommand{\natexlab}[1]{#1}
\providecommand{\url}[1]{\texttt{#1}}
\expandafter\ifx\csname urlstyle\endcsname\relax
  \providecommand{\doi}[1]{doi: #1}\else
  \providecommand{\doi}{doi: \begingroup \urlstyle{rm}\Url}\fi

\bibitem[Agarwal et~al.(2024)Agarwal, Vieillard, Zhou, Stanczyk, Ramos~Garea,
  Geist, and Bachem]{agarwal2024gkd}
Rishabh Agarwal, Nino Vieillard, Yongchao Zhou, Piotr Stanczyk, Sabela
  Ramos~Garea, Matthieu Geist, and Olivier Bachem.
\newblock On-policy distillation of language models: Learning from
  self-generated mistakes.
\newblock In B.~Kim, Y.~Yue, S.~Chaudhuri, K.~Fragkiadaki, M.~Khan, and Y.~Sun
  (eds.), \emph{International Conference on Learning Representations}, volume
  2024, pp.\  21246--21263, 2024.
\newblock URL
  \url{https://proceedings.iclr.cc/paper_files/paper/2024/file/5be69a584901a26c521c2b51e40a4c20-Paper-Conference.pdf}.

\bibitem[Bai et~al.(2025)Bai, Cai, Chen, Chen, Chen, Cheng, Deng, Ding, Gao,
  Ge, Ge, Guo, Huang, Huang, Huang, Hui, Jiang, Li, Li, Li, Li, Lin, Lin, Liu,
  Liu, Liu, Liu, Liu, Liu, Lu, Luo, Lv, Men, Meng, Ren, Ren, Song, Sun, Tang,
  Tu, Wan, Wang, Wang, Wang, Wang, Xie, Xu, Xu, Xu, Yang, Yang, Yang, Yang, Yu,
  Zhang, Zhang, Zhang, Zheng, Zhong, Zhou, Zhou, Zhou, Zhu, and
  Zhu]{bai2025qwen3vl}
Shuai Bai, Yuxuan Cai, Ruizhe Chen, Keqin Chen, Xionghui Chen, Zesen Cheng,
  Lianghao Deng, Wei Ding, Chang Gao, Chunjiang Ge, Wenbin Ge, Zhifang Guo,
  Qidong Huang, Jie Huang, Fei Huang, Binyuan Hui, Shutong Jiang, Zhaohai Li,
  Mingsheng Li, Mei Li, Kaixin Li, Zicheng Lin, Junyang Lin, Xuejing Liu,
  Jiawei Liu, Chenglong Liu, Yang Liu, Dayiheng Liu, Shixuan Liu, Dunjie Lu,
  Ruilin Luo, Chenxu Lv, Rui Men, Lingchen Meng, Xuancheng Ren, Xingzhang Ren,
  Sibo Song, Yuchong Sun, Jun Tang, Jianhong Tu, Jianqiang Wan, Peng Wang,
  Pengfei Wang, Qiuyue Wang, Yuxuan Wang, Tianbao Xie, Yiheng Xu, Haiyang Xu,
  Jin Xu, Zhibo Yang, Mingkun Yang, Jianxin Yang, An~Yang, Bowen Yu, Fei Zhang,
  Hang Zhang, Xi~Zhang, Bo~Zheng, Humen Zhong, Jingren Zhou, Fan Zhou, Jing
  Zhou, Yuanzhi Zhu, and Ke~Zhu.
\newblock {Qwen3-VL} technical report, 2025.
\newblock URL \url{https://arxiv.org/abs/2511.21631}.

\bibitem[Bissiri et~al.(2016)Bissiri, Holmes, and Walker]{bissiri2016belief}
P.~G. Bissiri, C.~C. Holmes, and S.~G. Walker.
\newblock A general framework for updating belief distributions.
\newblock \emph{Journal of the Royal Statistical Society Series B: Statistical
  Methodology}, 78\penalty0 (5):\penalty0 1103--1130, 11 2016.
\newblock ISSN 1369-7412.
\newblock \doi{10.1111/rssb.12158}.
\newblock URL \url{https://doi.org/10.1111/rssb.12158}.

\bibitem[Fu et~al.(2025)Fu, Hu, Li, Feng, Wang, Lin, Roth, Smith, Ma, and
  Krishna]{fu2024blink}
Xingyu Fu, Yushi Hu, Bangzheng Li, Yu~Feng, Haoyu Wang, Xudong Lin, Dan Roth,
  Noah~A. Smith, Wei-Chiu Ma, and Ranjay Krishna.
\newblock {BLINK}: Multimodal large language models can see but not perceive.
\newblock In Ale{\v{s}} Leonardis, Elisa Ricci, Stefan Roth, Olga Russakovsky,
  Torsten Sattler, and G{\"u}l Varol (eds.), \emph{Computer Vision -- ECCV
  2024}, volume 15081 of \emph{Lecture Notes in Computer Science}, pp.\
  148--166, Cham, 2025. Springer Nature Switzerland.
\newblock ISBN 978-3-031-73337-6.
\newblock \doi{10.1007/978-3-031-73337-6_9}.
\newblock URL \url{https://doi.org/10.1007/978-3-031-73337-6_9}.

\bibitem[Guo et~al.(2025)Guo, Xu, Zhang, Song, Peng, Deng, Dong, Nakayama,
  Geng, Wang, Ni, Yang, Rao, Peng, Hu, Wetzstein, and Hu]{guo2025rbench}
Meng-Hao Guo, Jiajun Xu, Yi~Zhang, Jiaxi Song, Haoyang Peng, Yi-Xuan Deng,
  Xinzhi Dong, Kiyohiro Nakayama, Zhengyang Geng, Chen Wang, Bolin Ni, Guo-Wei
  Yang, Yongming Rao, Houwen Peng, Han Hu, Gordon Wetzstein, and Shi-Min Hu.
\newblock {RB}ench: Graduate-level multi-disciplinary benchmarks for {LLM} \&
  {MLLM} complex reasoning evaluation.
\newblock In Aarti Singh, Maryam Fazel, Daniel Hsu, Simon Lacoste-Julien, Felix
  Berkenkamp, Tegan Maharaj, Kiri Wagstaff, and Jerry Zhu (eds.),
  \emph{Proceedings of the 42nd International Conference on Machine Learning},
  volume 267 of \emph{Proceedings of Machine Learning Research}, pp.\
  21184--21201. PMLR, 2025.
\newblock URL \url{https://proceedings.mlr.press/v267/guo25r.html}.

\bibitem[He et~al.(2024)He, Luo, Bai, Hu, Thai, Shen, Hu, Han, Huang, Zhang,
  Liu, Qi, Liu, and Sun]{he2024olympiadbench}
Chaoqun He, Renjie Luo, Yuzhuo Bai, Shengding Hu, Zhen Thai, Junhao Shen, Jinyi
  Hu, Xu~Han, Yujie Huang, Yuxiang Zhang, Jie Liu, Lei Qi, Zhiyuan Liu, and
  Maosong Sun.
\newblock {O}lympiad{B}ench: A challenging benchmark for promoting {AGI} with
  olympiad-level bilingual multimodal scientific problems.
\newblock In Lun-Wei Ku, Andre Martins, and Vivek Srikumar (eds.),
  \emph{Proceedings of the 62nd Annual Meeting of the Association for
  Computational Linguistics (Volume 1: Long Papers)}, pp.\  3828--3850,
  Bangkok, Thailand, August 2024. Association for Computational Linguistics.
\newblock \doi{10.18653/v1/2024.acl-long.211}.
\newblock URL \url{https://aclanthology.org/2024.acl-long.211/}.

\bibitem[He et~al.(2026)He, Liang, Xu, Liu, Chen, Wang, Song, Yu, Liang, Wang,
  Zhang, Wang, Tu, Mi, and Yu]{he2025deepmath103k}
Zhiwei He, Tian Liang, Jiahao Xu, Qiuzhi Liu, Xingyu Chen, Yue Wang, Linfeng
  Song, Dian Yu, Zhenwen Liang, Wenxuan Wang, Zhuosheng Zhang, Rui Wang,
  Zhaopeng Tu, Haitao Mi, and Dong Yu.
\newblock {DeepMath-103K}: A large-scale, challenging, decontaminated, and
  verifiable mathematical dataset for advancing reasoning.
\newblock In \emph{The Fourteenth International Conference on Learning
  Representations}, 2026.
\newblock URL \url{https://openreview.net/forum?id=kHB5Te5IWm}.

\bibitem[Kim et~al.(2026)Kim, Chang, Hwang, Kim, and Ye]{kim2026unir}
Jaemin Kim, Hangeol Chang, Hyunmin Hwang, Choonghan Kim, and Jong~Chul Ye.
\newblock Universal reasoner: A single, composable plug-and-play reasoner for
  frozen {LLM}s.
\newblock In \emph{Forty-third International Conference on Machine Learning},
  2026.
\newblock URL \url{https://openreview.net/forum?id=JheeAwquCb}.

\bibitem[{Kwai Keye Team} et~al.(2026){Kwai Keye Team}, Wen, Liu, Song, Rao,
  Zhang, Li, Fan, Ju, Chen, Chen, Yuan, Yang, Jiang, Gai, Zhou, Nie, Na, Zhang,
  Gao, Zheng, Chen, Yang, Gao, Yang, Liu, Diao, Zhang, Su, Chen, Hong, Lu,
  Long, Yang, Li, Fan, Xia, Chen, Lai, Yi, Jia, Liang, Xu, Ma, Tian, Han, Han,
  Li, Wang, Jia, Chen, Shi, and Zhang]{kwaikeyeteam2026keyevl2}
{Kwai Keye Team}, Bin Wen, Changyi Liu, Chengru Song, Chongling Rao, Guowang
  Zhang, Han Li, Haonan Fan, Hengrui Ju, Jiankang Chen, Jiapeng Chen, Jiawei
  Yuan, Kaixuan Yang, Kaiyu Jiang, Kun Gai, Lingzhi Zhou, Na~Nie, Sen Na,
  Tianke Zhang, Tingting Gao, Xuanyu Zheng, Yulong Chen, Fan Yang, Haixuan Gao,
  Lele Yang, Mingqiao Liu, Muxi Diao, Qi~Zhang, Qile Su, Wei Chen, Wentao Hong,
  Xingyu Lu, Yancheng Long, Yankai Yang, Yingxin Li, Yiyang Fan, Yu~Xia, Yuzhe
  Chen, Ziliang Lai, Chuan Yi, Haonan Jia, Tianming Liang, Weixin Xu, Xiaoxiao
  Ma, Yang Tian, Yufei Han, Feng Han, Hang Li, Jing Wang, Jinghui Jia, Junmin
  Chen, Junyu Shi, and Ruilin Zhang.
\newblock {Kwai Keye-VL-2.0} technical report, 2026.
\newblock URL \url{https://arxiv.org/abs/2606.10651}.

\bibitem[Leng et~al.(2024)Leng, Zhang, Chen, Li, Lu, Miao, and
  Bing]{leng2023vcd}
Sicong Leng, Hang Zhang, Guanzheng Chen, Xin Li, Shijian Lu, Chunyan Miao, and
  Lidong Bing.
\newblock Mitigating object hallucinations in large vision-language models
  through visual contrastive decoding.
\newblock In \emph{Proceedings of the IEEE/CVF Conference on Computer Vision
  and Pattern Recognition (CVPR)}, pp.\  13872--13882, June 2024.
\newblock URL
  \url{https://openaccess.thecvf.com/content/CVPR2024/html/Leng_Mitigating_Object_Hallucinations_in_Large_Vision-Language_Models_through_Visual_Contrastive_CVPR_2024_paper.html}.

\bibitem[Li et~al.(2026)Li, Zuo, He, Zhang, Xiao, Qian, Yu, ang Gao, Yang, Liu,
  and Ding]{li2026rethinkingopd}
Yaxuan Li, Yuxin Zuo, Bingxiang He, Jinqian Zhang, Chaojun Xiao, Cheng Qian,
  Tianyu Yu, Huan ang Gao, Wenkai Yang, Zhiyuan Liu, and Ning Ding.
\newblock Rethinking on-policy distillation of large language models:
  Phenomenology, mechanism, and recipe, 2026.
\newblock URL \url{https://arxiv.org/abs/2604.13016}.

\bibitem[Liang et~al.(2026)Liang, Tian, Li, Jia, Huang, Zhou, and
  Fu]{liang2026vcsd}
Yijun Liang, Yunjie Tian, Yijiang Li, Yuqi Jia, Furong Huang, Tianyi Zhou, and
  Di~Fu.
\newblock Visual contrastive self-distillation, 2026.
\newblock URL \url{https://arxiv.org/abs/2607.21556}.

\bibitem[Lightman et~al.(2024)Lightman, Kosaraju, Burda, Edwards, Baker, Lee,
  Leike, Schulman, Sutskever, and Cobbe]{lightman2023verify}
Hunter Lightman, Vineet Kosaraju, Yuri Burda, Harrison Edwards, Bowen Baker,
  Teddy Lee, Jan Leike, John Schulman, Ilya Sutskever, and Karl Cobbe.
\newblock Let's verify step by step.
\newblock In B.~Kim, Y.~Yue, S.~Chaudhuri, K.~Fragkiadaki, M.~Khan, and Y.~Sun
  (eds.), \emph{International Conference on Learning Representations}, volume
  2024, pp.\  39578--39601, 2024.
\newblock URL
  \url{https://proceedings.iclr.cc/paper_files/paper/2024/file/aca97732e30bcf1303bc22ac3924fd16-Paper-Conference.pdf}.

\bibitem[Lin et~al.(2026)Lin, Liu, Zhu, Qin, Lin, Shang, He, Zhang, and
  Wu]{lin2026mmfinereason}
Honglin Lin, Zheng Liu, Yun Zhu, Chonghan Qin, Juekai Lin, Xiaoran Shang,
  Conghui He, Wentao Zhang, and Lijun Wu.
\newblock {MMFineReason}: Closing the multimodal reasoning gap via open
  data-centric methods, 2026.
\newblock URL \url{https://arxiv.org/abs/2601.21821}.

\bibitem[Liu et~al.(2021)Liu, Sap, Lu, Swayamdipta, Bhagavatula, Smith, and
  Choi]{liu2021dexperts}
Alisa Liu, Maarten Sap, Ximing Lu, Swabha Swayamdipta, Chandra Bhagavatula,
  Noah~A. Smith, and Yejin Choi.
\newblock {DE}xperts: Decoding-time controlled text generation with experts and
  anti-experts.
\newblock In Chengqing Zong, Fei Xia, Wenjie Li, and Roberto Navigli (eds.),
  \emph{Proceedings of the 59th Annual Meeting of the Association for
  Computational Linguistics and the 11th International Joint Conference on
  Natural Language Processing (Volume 1: Long Papers)}, pp.\  6691--6706,
  Online, August 2021. Association for Computational Linguistics.
\newblock \doi{10.18653/v1/2021.acl-long.522}.
\newblock URL \url{https://aclanthology.org/2021.acl-long.522/}.

\bibitem[Liu et~al.(2024)Liu, Han, Wang, Tsvetkov, Choi, and
  Smith]{liu2024proxytuning}
Alisa Liu, Xiaochuang Han, Yizhong Wang, Yulia Tsvetkov, Yejin Choi, and
  Noah~A. Smith.
\newblock Tuning language models by proxy.
\newblock In \emph{First Conference on Language Modeling}, 2024.
\newblock URL \url{https://openreview.net/forum?id=dribhnhm1i}.

\bibitem[Liu et~al.(2026)Liu, Lv, Li, Zhu, Wang, Zhang, Chen, Li, Li, Gao, and
  Wu]{liu2026vaopd}
Ruiqi Liu, Xiaolei Lv, Gengsheng Li, Ximo Zhu, Zhiheng Wang, Zhengbo Zhang,
  Junkai Chen, Zhiheng Li, Bo~Li, Jun Gao, and Shu Wu.
\newblock Visual-advantage on-policy distillation for vision-language models,
  2026.
\newblock URL \url{https://arxiv.org/abs/2605.21924}.

\bibitem[Loshchilov \& Hutter(2019)Loshchilov and Hutter]{loshchilov2019adamw}
Ilya Loshchilov and Frank Hutter.
\newblock Decoupled weight decay regularization.
\newblock In \emph{International Conference on Learning Representations}, 2019.
\newblock URL \url{https://openreview.net/forum?id=Bkg6RiCqY7}.

\bibitem[Lu et~al.(2024)Lu, Bansal, Xia, Liu, Li, Hajishirzi, Cheng, Chang,
  Galley, and Gao]{lu2024mathvista}
Pan Lu, Hritik Bansal, Tony Xia, Jiacheng Liu, Chunyuan Li, Hannaneh
  Hajishirzi, Hao Cheng, Kai-Wei Chang, Michel Galley, and Jianfeng Gao.
\newblock {MathVista}: Evaluating mathematical reasoning of foundation models
  in visual contexts.
\newblock In B.~Kim, Y.~Yue, S.~Chaudhuri, K.~Fragkiadaki, M.~Khan, and Y.~Sun
  (eds.), \emph{International Conference on Learning Representations}, volume
  2024, pp.\  23439--23554, 2024.
\newblock URL
  \url{https://proceedings.iclr.cc/paper_files/paper/2024/file/663bce02a0050c4a11f1eb8a7f1429d3-Paper-Conference.pdf}.

\bibitem[Ma et~al.(2026)Ma, Wei, Zhao, Zhang, Xiao, Li, Yang, Gao, Wang, Li,
  Dong, Sui, and Luo]{ma2026mopd}
Wenhan Ma, Jianyu Wei, Liang Zhao, Hailin Zhang, Bangjun Xiao, Lei Li, Qibin
  Yang, Bofei Gao, Yudong Wang, Rang Li, Jinhao Dong, Zhifang Sui, and Fuli
  Luo.
\newblock {MOPD}: Multi-teacher on-policy distillation for capability
  integration in {LLM} post-training, 2026.
\newblock URL \url{https://arxiv.org/abs/2606.30406}.

\bibitem[{Qwen Team}(2025)]{qwen2025qwen3_4b_instruct2507}
{Qwen Team}.
\newblock {Qwen3-4B-Instruct-2507}: Model card.
\newblock Hugging Face model card, 2025.
\newblock URL \url{https://huggingface.co/Qwen/Qwen3-4B-Instruct-2507}.

\bibitem[Wang et~al.(2024{\natexlab{a}})Wang, Pan, Shi, Lu, Ren, Zhou, Zhan,
  and Li]{wang2024mathvision}
Ke~Wang, Junting Pan, Weikang Shi, Zimu Lu, Houxing Ren, Aojun Zhou, Mingjie
  Zhan, and Hongsheng Li.
\newblock Measuring multimodal mathematical reasoning with {MATH-Vision}
  dataset.
\newblock In A.~Globerson, L.~Mackey, D.~Belgrave, A.~Fan, U.~Paquet,
  J.~Tomczak, and C.~Zhang (eds.), \emph{Advances in Neural Information
  Processing Systems}, volume~37, pp.\  95095--95169. Curran Associates, Inc.,
  2024{\natexlab{a}}.
\newblock \doi{10.52202/079017-3014}.
\newblock URL
  \url{https://proceedings.neurips.cc/paper_files/paper/2024/file/ad0edc7d5fa1a783f063646968b7315b-Paper-Datasets_and_Benchmarks_Track.pdf}.

\bibitem[Wang et~al.(2026)Wang, Liu, Liu, and Han]{wang2026vigos}
Sihan Wang, Xiyao Liu, Lianqing Liu, and Zhi Han.
\newblock Seeing before reasoning: Decoupling perception and reasoning for
  shortcut-resilient multimodal on-policy self-distillation, 2026.
\newblock URL \url{https://arxiv.org/abs/2606.19120}.

\bibitem[Wang et~al.(2024{\natexlab{b}})Wang, Ma, Zhang, Ni, Chandra, Guo, Ren,
  Arulraj, He, Jiang, Li, Ku, Wang, Zhuang, Fan, Yue, and
  Chen]{wang2024mmlupro}
Yubo Wang, Xueguang Ma, Ge~Zhang, Yuansheng Ni, Abhranil Chandra, Shiguang Guo,
  Weiming Ren, Aaran Arulraj, Xuan He, Ziyan Jiang, Tianle Li, Max Ku, Kai
  Wang, Alex Zhuang, Rongqi Fan, Xiang Yue, and Wenhu Chen.
\newblock {MMLU-Pro}: A more robust and challenging multi-task language
  understanding benchmark.
\newblock In A.~Globerson, L.~Mackey, D.~Belgrave, A.~Fan, U.~Paquet,
  J.~Tomczak, and C.~Zhang (eds.), \emph{Advances in Neural Information
  Processing Systems}, volume~37, pp.\  95266--95290. Curran Associates, Inc.,
  2024{\natexlab{b}}.
\newblock \doi{10.52202/079017-3018}.
\newblock URL
  \url{https://proceedings.neurips.cc/paper_files/paper/2024/file/ad236edc564f3e3156e1b2feafb99a24-Paper-Datasets_and_Benchmarks_Track.pdf}.

\bibitem[Wu et~al.(2024)Wu, Zhang, Zhang, Chen, Liao, Wang, Li, Sun, Yan, Zhai,
  and Lin]{wu2023qbench}
Haoning Wu, Zicheng Zhang, Erli Zhang, Chaofeng Chen, Liang Liao, Annan Wang,
  Chunyi Li, Wenxiu Sun, Qiong Yan, Guangtao Zhai, and Weisi Lin.
\newblock {Q-Bench}: A benchmark for general-purpose foundation models on
  low-level vision.
\newblock In B.~Kim, Y.~Yue, S.~Chaudhuri, K.~Fragkiadaki, M.~Khan, and Y.~Sun
  (eds.), \emph{International Conference on Learning Representations}, volume
  2024, pp.\  12547--12573, 2024.
\newblock URL
  \url{https://proceedings.iclr.cc/paper_files/paper/2024/file/363d4c97bb411e4b07612915b76c06ae-Paper-Conference.pdf}.

\bibitem[Yang et~al.(2025)Yang, Li, Yang, Zhang, Hui, Zheng, Yu, Gao, Huang,
  Lv, Zheng, Liu, Zhou, Huang, Hu, Ge, Wei, Lin, Tang, Yang, Tu, Zhang, Yang,
  Yang, Zhou, Zhou, Lin, Dang, Bao, Yang, Yu, Deng, Li, Xue, Li, Zhang, Wang,
  Zhu, Men, Gao, Liu, Luo, Li, Tang, Yin, Ren, Wang, Zhang, Ren, Fan, Su,
  Zhang, Zhang, Wan, Liu, Wang, Cui, Zhang, Zhou, and Qiu]{yang2025qwen3}
An~Yang, Anfeng Li, Baosong Yang, Beichen Zhang, Binyuan Hui, Bo~Zheng, Bowen
  Yu, Chang Gao, Chengen Huang, Chenxu Lv, Chujie Zheng, Dayiheng Liu, Fan
  Zhou, Fei Huang, Feng Hu, Hao Ge, Haoran Wei, Huan Lin, Jialong Tang, Jian
  Yang, Jianhong Tu, Jianwei Zhang, Jianxin Yang, Jiaxi Yang, Jing Zhou,
  Jingren Zhou, Junyang Lin, Kai Dang, Keqin Bao, Kexin Yang, Le~Yu, Lianghao
  Deng, Mei Li, Mingfeng Xue, Mingze Li, Pei Zhang, Peng Wang, Qin Zhu, Rui
  Men, Ruize Gao, Shixuan Liu, Shuang Luo, Tianhao Li, Tianyi Tang, Wenbiao
  Yin, Xingzhang Ren, Xinyu Wang, Xinyu Zhang, Xuancheng Ren, Yang Fan, Yang
  Su, Yichang Zhang, Yinger Zhang, Yu~Wan, Yuqiong Liu, Zekun Wang, Zeyu Cui,
  Zhenru Zhang, Zhipeng Zhou, and Zihan Qiu.
\newblock {Qwen3} technical report, 2025.
\newblock URL \url{https://arxiv.org/abs/2505.09388}.

\bibitem[Yin et~al.(2026)Yin, Yao, Cai, Chen, Wang, Yang, Su, and
  Zhao]{yin2026hopd}
Qixiang Yin, Huanjin Yao, Yuchen Cai, Jianghao Chen, Ziyi Wang, Min Yang, Fei
  Su, and Zhicheng Zhao.
\newblock {H-OPD}: Confidence aware heterogeneous multi-teacher multimodal
  on-policy distillation, 2026.
\newblock URL \url{https://arxiv.org/abs/2607.02592}.

\bibitem[Yoon et~al.(2026)Yoon, Yoon, Jang, Eom, Hong, Hasegawa-Johnson, Dai,
  Luo, and Yoo]{yoon2026decomposedopd}
Hee~Suk Yoon, Eunseop Yoon, Jaehyun Jang, SooHwan Eom, Ji~Woo Hong, Mark~A.
  Hasegawa-Johnson, Qi~Dai, Chong Luo, and Chang~D. Yoo.
\newblock Decomposed on-policy distillation for vision-language reasoning:
  Steering gradients for visual grounding.
\newblock In \emph{Forty-third International Conference on Machine Learning},
  2026.
\newblock URL \url{https://openreview.net/forum?id=Jb1YBPqyb7}.

\bibitem[Yue et~al.(2024)Yue, Ni, Zhang, Zheng, Liu, Zhang, Stevens, Jiang,
  Ren, Sun, Wei, Yu, Yuan, Sun, Yin, Zheng, Yang, Liu, Huang, Sun, Su, and
  Chen]{yue2024mmmu}
Xiang Yue, Yuansheng Ni, Kai Zhang, Tianyu Zheng, Ruoqi Liu, Ge~Zhang, Samuel
  Stevens, Dongfu Jiang, Weiming Ren, Yuxuan Sun, Cong Wei, Botao Yu, Ruibin
  Yuan, Renliang Sun, Ming Yin, Boyuan Zheng, Zhenzhu Yang, Yibo Liu, Wenhao
  Huang, Huan Sun, Yu~Su, and Wenhu Chen.
\newblock {MMMU}: A massive multi-discipline multimodal understanding and
  reasoning benchmark for expert {AGI}.
\newblock In \emph{Proceedings of the IEEE/CVF Conference on Computer Vision
  and Pattern Recognition (CVPR)}, pp.\  9556--9567, June 2024.
\newblock URL
  \url{https://openaccess.thecvf.com/content/CVPR2024/html/Yue_MMMU_A_Massive_Multi-discipline_Multimodal_Understanding_and_Reasoning_Benchmark_for_CVPR_2024_paper.html}.

\bibitem[Yue et~al.(2025)Yue, Zheng, Ni, Wang, Zhang, Tong, Sun, Yu, Zhang,
  Sun, Su, Chen, and Neubig]{yue2024mmmupro}
Xiang Yue, Tianyu Zheng, Yuansheng Ni, Yubo Wang, Kai Zhang, Shengbang Tong,
  Yuxuan Sun, Botao Yu, Ge~Zhang, Huan Sun, Yu~Su, Wenhu Chen, and Graham
  Neubig.
\newblock {MMMU}-pro: A more robust multi-discipline multimodal understanding
  benchmark.
\newblock In Wanxiang Che, Joyce Nabende, Ekaterina Shutova, and Mohammad~Taher
  Pilehvar (eds.), \emph{Proceedings of the 63rd Annual Meeting of the
  Association for Computational Linguistics (Volume 1: Long Papers)}, pp.\
  15134--15186, Vienna, Austria, July 2025. Association for Computational
  Linguistics.
\newblock ISBN 979-8-89176-251-0.
\newblock \doi{10.18653/v1/2025.acl-long.736}.
\newblock URL \url{https://aclanthology.org/2025.acl-long.736/}.

\bibitem[Zhang et~al.(2025)Zhang, Jiang, Zhang, Lin, Guo, Qiu, Zhou, Lu, Chang,
  Qiao, Gao, and Li]{zhang2024mathverse}
Renrui Zhang, Dongzhi Jiang, Yichi Zhang, Haokun Lin, Ziyu Guo, Pengshuo Qiu,
  Aojun Zhou, Pan Lu, Kai-Wei Chang, Yu~Qiao, Peng Gao, and Hongsheng Li.
\newblock {MATHVERSE}: Does your multi-modal {LLM} truly see the diagrams in
  visual math problems?
\newblock In Ale{\v{s}} Leonardis, Elisa Ricci, Stefan Roth, Olga Russakovsky,
  Torsten Sattler, and G{\"u}l Varol (eds.), \emph{Computer Vision -- ECCV
  2024}, volume 15066 of \emph{Lecture Notes in Computer Science}, pp.\
  169--186, Cham, 2025. Springer Nature Switzerland.
\newblock ISBN 978-3-031-73242-3.
\newblock \doi{10.1007/978-3-031-73242-3_10}.
\newblock URL \url{https://doi.org/10.1007/978-3-031-73242-3_10}.

\bibitem[Zhang et~al.(2024)Zhang, Wu, Zhang, Zhai, and
  Lin]{zhang2024qbenchplus}
Zicheng Zhang, Haoning Wu, Erli Zhang, Guangtao Zhai, and Weisi Lin.
\newblock {Q-Bench+}: A benchmark for multi-modal foundation models on
  low-level vision from single images to pairs.
\newblock \emph{IEEE Transactions on Pattern Analysis and Machine
  Intelligence}, 46\penalty0 (12):\penalty0 10404--10418, 2024.
\newblock \doi{10.1109/TPAMI.2024.3445770}.
\newblock URL \url{https://doi.org/10.1109/TPAMI.2024.3445770}.

\end{thebibliography}

\appendix
\section{Theoretical Analysis}
\label{app:proofs}
\label{sec:calibration-details}

This section establishes the properties of the teacher composition and
calibration rule in Section~\ref{sec:method}. We fix a student-generated
prefix and suppress its index $t$. The vocabulary $\V$ is finite, and
$q_L$, $q_V^I$, and $q_V^0$ are strictly positive distributions, as holds
for softmax distributions with finite logits. All logarithms are natural,
and $\Delta_{\V}$ denotes the probability simplex over $\V$.
We use the convention $0\log 0=0$ in KL divergences.

Define the raw visual residual, source KL, and target budget by
\begin{equation}
r(v):=\log\frac{q_V^I(v)}{q_V^0(v)},
\qquad d:=\KL(q_V^I\|q_V^0),
\qquad b:=\rho d,\quad \rho\geq0.
\label{eq:app-source-budget}
\end{equation}
For a finite coefficient $\alpha\geq0$, let
\begin{equation}
\begin{aligned}
Z(\alpha)&:=\sum_{v\in\V}q_L(v)e^{\alpha r(v)},
& A(\alpha)&:=\log Z(\alpha),\\
q_\alpha(v)&:=q_L(v)e^{\alpha r(v)-A(\alpha)},
& K(\alpha)&:=\KL(q_\alpha\|q_L).
\end{aligned}
\label{eq:partition}
\end{equation}
Thus exact calibration requires $K(\alpha^\star)=b$.
The results below concern the raw residual and full-vocabulary
distributions. Residual stabilization, restricted coefficient statistics,
and the sparse student update are described separately in
Appendix~\ref{sec:implementation}.

\subsection{Teacher Composition}
\label{app:composition-proof}

\begin{proposition}[Variational characterization]
For every finite $\alpha\geq0$, $q_\alpha$ is the unique maximizer of
\begin{equation}
\max_{q\in\Delta_{\V}}
\left\{\alpha\E_q[r]-\KL(q\|q_L)\right\}.
\label{eq:app-variational-objective}
\end{equation}
\end{proposition}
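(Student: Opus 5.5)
The plan is the standard Gibbs variational (Donsker--Varadhan) identity, proved by rewriting the objective as a constant minus a single KL divergence. Fix a finite $\alpha\geq0$. Since $q_L$ is strictly positive and $r$ is finite on the finite set $\V$, $Z(\alpha)$ is a finite positive number. It follows that $q_\alpha$ in \eqref{eq:partition} is a well-defined, strictly positive element of $\Delta_{\V}$, with $\log q_\alpha(v)=\log q_L(v)+\alpha r(v)-A(\alpha)$ for every $v$.

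The key step is an exact identity valid for every $q\in\Delta_{\V}$:
\begin{equation*}
\alpha\E_q[r]-\KL(q\|q_L)=A(\alpha)-\KL(q\|q_\alpha).
\end{equation*}
To verify it, write $\KL(q\|q_\alpha)=\sum_{v}q(v)\log q(v)-\sum_v q(v)\log q_\alpha(v)$. Then substitute the expression for $\log q_\alpha$ and use $\sum_v q(v)=1$ to pull out $A(\alpha)$. The terms regroup into $\KL(q\|q_L)-\alpha\E_q[r]+A(\alpha)$.

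Some care is needed with the support of $q$, since $q$ may vanish on some tokens. Under the convention $0\log0=0$, all sums run effectively over $\operatorname{supp}(q)$. Every quantity involved is then finite, because $q_L$ and $q_\alpha$ are strictly positive and $r$ is finite. In particular, the objective is finite on all of $\Delta_{\V}$, so no value is $-\infty$ and no indeterminate form arises.

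With the identity in hand, the objective equals the constant $A(\alpha)$ minus $\KL(q\|q_\alpha)$. By Gibbs' inequality, $\KL(q\|q_\alpha)\geq0$, with equality if and only if $q=q_\alpha$. Hence the supremum is $A(\alpha)$, it is attained at $q_\alpha$, and $q_\alpha$ is the only maximizer.

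The main obstacle is modest: stating the strict-equality case of Gibbs' inequality cleanly. I would do this via strict concavity of $\log$ with Jensen's inequality on $\operatorname{supp}(q)$. That gives $-\KL(q\|q_\alpha)\leq\log\sum_{v\in\operatorname{supp}(q)}q_\alpha(v)\leq0$. The first inequality is tight only if $q_\alpha/q$ is constant on $\operatorname{supp}(q)$. The second is tight only if $\operatorname{supp}(q)=\V$, which uses the strict positivity of $q_\alpha$. Together these force $q=q_\alpha$.

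The case $\alpha=0$ needs no separate treatment: there $q_0=q_L$ and the objective is simply $-\KL(q\|q_L)$.
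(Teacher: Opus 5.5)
Your proposal is correct and follows the paper's proof: both establish the identity $\KL(q\|q_\alpha)=\KL(q\|q_L)-\alpha\E_q[r]+A(\alpha)$, rewrite the objective as $A(\alpha)-\KL(q\|q_\alpha)$, and conclude from nonnegativity of KL with equality iff $q=q_\alpha$. Your extra care with $\operatorname{supp}(q)$ and the Jensen-based equality case simply spells out what the paper takes as standard.
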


\begin{proof}
For any $q\in\Delta_{\V}$, substituting the definition of $q_\alpha$ gives
\begin{equation}
\KL(q\|q_\alpha)
=\KL(q\|q_L)-\alpha\E_q[r]+A(\alpha).
\label{eq:app-variational-identity}
\end{equation}
The objective in \eqref{eq:app-variational-objective} therefore
equals $A(\alpha)-\KL(q\|q_\alpha)$. Nonnegativity of KL, with equality
if and only if $q=q_\alpha$, proves the claim.
\end{proof}

\noindent\textbf{Equivalent logit-space form.}
Let $z_L$, $z_V^I$, and $z_V^0$ be logits for the three source
distributions. Each log probability equals its logit minus a
branch-specific log-normalizer. Consequently,
$\log q_L(v)+\alpha r(v)$ differs from
$z_L(v)+\alpha[z_V^I(v)-z_V^0(v)]$ only by a token-independent constant,
and
\begin{equation}
q_\alpha
=\operatorname{softmax}_{\V}
\bigl(z_L+\alpha[z_V^I-z_V^0]\bigr).
\label{eq:app-logit-composition}
\end{equation}
This is an algebraic identity for the source predictions; it does not
require the image-free prediction to equal a marginal over images.

\subsection{Exact KL Calibration: Existence and Uniqueness}
\label{sec:existence}

\begin{proposition}[Calibration range]
\label{prop:range}
If $r$ is nonconstant, $K$ is continuous and strictly increasing on
$[0,\infty)$, with
\begin{equation}
K(0)=0,\qquad
K'(\alpha)=\alpha\Var_{q_\alpha}[r]>0\quad(\alpha>0).
\label{eq:app-kl-derivative}
\end{equation}
Let $M:=\arg\max_{v\in\V}r(v)$ and
$q_L(M):=\sum_{v\in M}q_L(v)$. Then
\begin{equation}
\lim_{\alpha\to\infty}K(\alpha)
=K_\infty:=-\log q_L(M).
\label{eq:app-kl-limit}
\end{equation}
In particular, $K(\alpha^\star)=b$ has a unique finite positive solution
if and only if $0<b<K_\infty$.
\end{proposition}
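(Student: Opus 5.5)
The plan is to work throughout with the log-partition function $A(\alpha)=\log Z(\alpha)$ from \eqref{eq:partition} and exponential-family identities. Since $q_\alpha(v)=q_L(v)e^{\alpha r(v)-A(\alpha)}$, we have $\log(q_\alpha/q_L)=\alpha r-A(\alpha)$. Hence
\begin{equation*}
K(\alpha)=\alpha\E_{q_\alpha}[r]-A(\alpha)=\alpha A'(\alpha)-A(\alpha),
\end{equation*}
where we use the standard facts $A'(\alpha)=\E_{q_\alpha}[r]$ and $A''(\alpha)=\Var_{q_\alpha}[r]$. These follow by differentiating the finite sum $Z$, which is smooth (real-analytic) in $\alpha$ because $\V$ is finite and $r$ is finite. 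Continuity of $K$ is then immediate, and $K(0)=0\cdot A'(0)-\log 1=0$. Differentiating gives $K'(\alpha)=A'+\alpha A''-A'=\alpha\Var_{q_\alpha}[r]$. Since $q_\alpha$ is strictly positive, $q_L$ being strictly positive, its variance of $r$ is zero only if $r$ is constant. So $K'>0$ on $(0,\infty)$, and $K$ is strictly increasing on $[0,\infty)$.

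For the limit \eqref{eq:app-kl-limit}, let $r^\ast=\max_v r(v)$ and let $\delta>0$ be the gap to the second-largest value, which exists since $r$ is nonconstant. Write $A(\alpha)=\alpha r^\ast+\log\bigl(q_L(M)+\sum_{v\notin M}q_L(v)e^{-\alpha(r^\ast-r(v))}\bigr)$, so $A(\alpha)-\alpha r^\ast\to\log q_L(M)$. Then $q_\alpha$ concentrates on $M$: $q_\alpha(v)\le e^{-\alpha\delta}/q_L(M)$ for $v\notin M$, and $q_\alpha(v)\to q_L(v)/q_L(M)$ on $M$. Consequently $\alpha(\E_{q_\alpha}[r]-r^\ast)=-\sum_{v\notin M}\alpha(r^\ast-r(v))q_\alpha(v)\to0$, because $\alpha e^{-\alpha\delta}\to0$. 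Combining these,
\begin{equation*}
K(\alpha)=\alpha(\E_{q_\alpha}[r]-r^\ast)-(A(\alpha)-\alpha r^\ast)\to-\log q_L(M).
\end{equation*}
This step, controlling the product $\alpha\cdot(\text{exponentially small tail})$, is the only place requiring care. An alternative route is to note that $K(\alpha)=\KL(q_\alpha\|q_L)\to\KL(q_L(\cdot\mid M)\|q_L)=-\log q_L(M)$ by continuity of KL in its first argument on the finite simplex, with $q_L$ fixed and positive. That is actually cleaner, and I would present it as the main argument with the explicit estimate as backup.

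Finally, existence and uniqueness follow from the intermediate value theorem and strict monotonicity. Since $K$ is strictly increasing from $K(0)=0$ with supremum $K_\infty$, which is not attained, the range of $K$ on $(0,\infty)$ is exactly $(0,K_\infty)$. Thus $K(\alpha^\star)=b$ has a finite positive solution if and only if $0<b<K_\infty$, and strict monotonicity makes that solution unique. Conversely, $b=0$ forces $\alpha^\star=0$, which is not positive, and $b\ge K_\infty$ is never attained at finite $\alpha$.
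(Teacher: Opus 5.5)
Your proof is correct and follows essentially the same route as the paper: the identities $K(\alpha)=\alpha A'(\alpha)-A(\alpha)$ and $K'(\alpha)=\alpha A''(\alpha)=\alpha\Var_{q_\alpha}[r]$, strict positivity of $q_\alpha$ for strict monotonicity, and the limit via concentration of $q_\alpha$ on $M$ plus continuity of $\KL(\cdot\|q_L)$ on the finite simplex (exactly the argument you chose as primary), followed by continuity and monotonicity for existence and uniqueness. Your explicit $\alpha e^{-\alpha\delta}$ tail estimate is a valid self-contained alternative for the limit, but the paper does not need it.
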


\begin{proof}
Differentiating the finite sum defining $Z$ yields
\begin{equation}
A'(\alpha)=\E_{q_\alpha}[r],
\qquad A''(\alpha)=\Var_{q_\alpha}[r].
\label{eq:app-partition-derivatives}
\end{equation}
Since $\log(q_\alpha/q_L)=\alpha r-A(\alpha)$,
\begin{equation}
K(\alpha)=\alpha A'(\alpha)-A(\alpha),
\qquad K'(\alpha)=\alpha A''(\alpha).
\label{eq:app-kl-partition}
\end{equation}
For finite $\alpha$, $q_\alpha$ is strictly positive on $\V$.
A nonconstant $r$ therefore has positive variance under $q_\alpha$,
giving strict monotonicity for positive coefficients. Also,
$A(0)=0$ and $q_{\alpha=0}=q_L$, so $K(0)=0$.

To determine the limiting value, write $r_{\max}:=\max_v r(v)$ and
subtract it from every residual in the normalized expression for
$q_\alpha$. The factors $e^{\alpha(r(v)-r_{\max})}$ converge to one
on $M$ and to zero elsewhere. Hence
\begin{equation}
\lim_{\alpha\to\infty}q_\alpha(v)
=\begin{cases}
q_L(v)/q_L(M),&v\in M,\\
0,&v\notin M.
\end{cases}
\label{eq:app-limiting-teacher}
\end{equation}
On a finite vocabulary with strictly positive $q_L$, $\KL(q\|q_L)$ is
continuous in $q$, including at zero coordinates under the stated
convention. Applying this continuity to
\eqref{eq:app-limiting-teacher} gives
$K_\infty=-\log q_L(M)$. Continuity and strict monotonicity then prove
the existence and uniqueness statement.
\end{proof}

For nonconstant $r$, the zero-budget solution is uniquely $\alpha^\star=0$.
The upper endpoint $K_\infty$ is approached only as $\alpha\to\infty$;
no finite coefficient attains it, and larger budgets cannot be attained
along this composition family. If $r$ is constant, $q_\alpha=q_L$ and
$K(\alpha)=0$ for every coefficient. For the raw residual in
\eqref{eq:app-source-budget}, constancy implies
$q_V^I=e^r q_V^0$, and normalization forces $r=0$ and $d=0$.
We select $\alpha^\star=0$ in this degenerate case.

\noindent\textbf{KL-constrained interpretation.}
For $0<b<K_\infty$, the exactly calibrated teacher also uniquely solves
\begin{equation}
\max_{q\in\Delta_{\V}}\ \E_q[r]
\quad\text{subject to}\quad \KL(q\|q_L)\leq b.
\label{eq:app-constrained}
\end{equation}
Indeed, \eqref{eq:app-variational-identity} and
$K(\alpha^\star)=b$ imply, for every feasible $q$,
\begin{equation}
\begin{aligned}
\alpha^\star\bigl(\E_q[r]-\E_{q_{\alpha^\star}}[r]\bigr)
&=\KL(q\|q_L)-b-\KL(q\|q_{\alpha^\star})\\
&\leq-\KL(q\|q_{\alpha^\star})\leq0.
\end{aligned}
\label{eq:app-constrained-proof}
\end{equation}
Since $\alpha^\star>0$, no feasible distribution has a greater expected
residual, and equality requires $q=q_{\alpha^\star}$. For $b=0$, the
only feasible distribution is $q_L$. This characterization holds for a
prescribed budget; the source-referenced choice $b=\rho d$ specifies
that budget rather than following from constrained optimality.

\noindent\textbf{Finite coefficient caps.}
For nonconstant $r$ and a finite cap $\alpha_{\max}>0$, the attainable
budgets with $0\leq\alpha\leq\alpha_{\max}$ form the closed interval
$[0,K(\alpha_{\max})]$. A bounded exact reference can therefore be
defined by the unique solution of
\begin{equation}
K(\alpha_{\mathrm{cap}})
=\min\{b,K(\alpha_{\max})\},
\qquad 0\leq\alpha_{\mathrm{cap}}\leq\alpha_{\max}.
\label{eq:app-capped-reference}
\end{equation}
When $b>K(\alpha_{\max})$, this reference returns the cap and falls
short of the requested budget. Clipping a locally approximated
coefficient need not give this bounded exact solution.

\subsection{Local Approximation and Error}
\label{sec:alpha0}

For nonconstant $r$, define
$\mu:=\E_{q_L}[r]$, $\sigma_L^2:=\Var_{q_L}[r]>0$, and
$\kappa_{3,L}:=\E_{q_L}[(r-\mu)^3]$.
The finite-vocabulary log-partition function is analytic near zero,
with expansion
\begin{align}
A(\alpha)
&=\mu\alpha+\tfrac12\sigma_L^2\alpha^2
  +\tfrac16\kappa_{3,L}\alpha^3+O(\alpha^4),
\label{eq:app-cumulant-expansion}\\
K(\alpha)
&=\tfrac12\sigma_L^2\alpha^2
  +\tfrac13\kappa_{3,L}\alpha^3+O(\alpha^4).
\label{eq:app-kl-expansion}
\end{align}
The second identity follows by substituting the first into
\eqref{eq:app-kl-partition}. Matching its quadratic term to
$b=\rho d$ gives the coefficient used in the idealized local rule,
\begin{equation}
\widehat\alpha
=\sqrt{\frac{2b}{\sigma_L^2}}
=\sqrt{\frac{2\rho\KL(q_V^I\|q_V^0)}{\Var_{q_L}[r]}}.
\label{eq:app-local-coefficient}
\end{equation}
We set $\widehat\alpha=0$ when $\sigma_L^2=0$; for the raw residual
under the stated assumptions, this also implies $d=0$.

\noindent\textbf{Local accuracy.}
To state the approximation precisely, hold $q_L$ and a nonconstant $r$
fixed, introduce an auxiliary budget $0<\delta<K_\infty$, and let
$\alpha^\star(\delta)$ solve $K(\alpha^\star(\delta))=\delta$.
Writing $\widehat\alpha(\delta):=\sqrt{2\delta/\sigma_L^2}$, we obtain
\begin{align}
\alpha^\star(\delta)
&=\sqrt{\frac{2\delta}{\sigma_L^2}}+O(\delta),
\label{eq:app-inverse-error}\\
K(\widehat\alpha(\delta))-\delta
&=\frac{\kappa_{3,L}}{3}
  \left(\frac{2\delta}{\sigma_L^2}\right)^{3/2}
  +O(\delta^2),\qquad \delta\downarrow0.
\label{eq:app-budget-error}
\end{align}
For the first statement, strict monotonicity implies
$\alpha^\star(\delta)\to0$ as $\delta\downarrow0$.
\eqref{eq:app-kl-expansion} then gives
$\delta=\tfrac12\sigma_L^2[\alpha^\star(\delta)]^2
(1+O(\alpha^\star(\delta)))$.
Taking the nonnegative square root first establishes
$\alpha^\star(\delta)=O(\sqrt\delta)$ and then yields
\eqref{eq:app-inverse-error}. The second statement follows by
substituting $\widehat\alpha(\delta)$ into the same expansion.

These are local statements for a fixed prior and residual; their
constants may depend on both. Evaluating the approximation at the
observed budget $b=\rho d$ does not ensure exact matching or uniformly
small error across prefixes. The cubic term also shows that the
approximation may overshoot or undershoot the budget. If $b\geq K_\infty$,
the formula still defines a finite coefficient when $\sigma_L^2>0$,
but there is no finite exactly calibrated coefficient to approximate.

\noindent\textbf{Teacher discrepancy along a common composition path.}
For two finite coefficients $0\leq a_1\leq a_2$ with the same prior
and residual,
\begin{align}
\KL(q_{a_2}\|q_{a_1})
&=(a_2-a_1)A'(a_2)-A(a_2)+A(a_1)\nonumber\\
&=\int_{a_1}^{a_2}(u-a_1)A''(u)\,du\nonumber\\
&\leq\int_{a_1}^{a_2}uA''(u)\,du
 =K(a_2)-K(a_1).
\label{eq:error-bound}
\end{align}
The inequality uses $a_1\geq0$ and $A''(u)\geq0$.
If one coefficient is exactly calibrated to $b$, the right-hand side
equals the absolute budget error of the other coefficient. The bound
is directional: the KL is taken from the teacher with the larger
coefficient to the one with the smaller coefficient. It requires the
same prior, residual, and normalized support for both teachers.
It therefore does not directly compare a raw-residual teacher with a
teacher constructed using a different stabilized residual.

\subsection{Special Cases and Invariances}
\label{app:special-cases}

\noindent\textbf{Zero visual budget.}
If $\rho=0$, selecting $\alpha=0$ gives $q_\alpha=q_L$ regardless of
the source KL. If $q_V^I=q_V^0$, then $r=0$ and $d=0$, and every
coefficient yields $q_L$; our zero-coefficient convention selects a
single representative.

\noindent\textbf{Self-prior recovery.}
Suppose $q_L=q_V^0$ and $\rho=1$. For the raw residual,
\begin{equation}
q_L(v)e^{r(v)}=q_V^I(v),
\qquad Z(1)=1,
\qquad q_{\alpha=1}=q_V^I.
\label{eq:app-self-prior}
\end{equation}
Thus $K(1)=\KL(q_V^I\|q_V^0)=d=b$.
If $d>0$, Proposition~\ref{prop:range} implies that the unique exact
coefficient is $\alpha^\star=1$. If $d=0$, the two source distributions
coincide and the zero-coefficient convention recovers the same teacher.
This identity concerns exact calibration: the local coefficient
$\sqrt{2d/\Var_{q_V^0}[r]}$ need not equal one. Residual modification
can also prevent recovery, and a cap below one excludes the exact
solution when $d>0$.

\noindent\textbf{Affine transformations of the residual.}
For a fixed budget $b$, let $r'(v)=c\,r(v)+\beta$, where $c>0$ and
$\beta$ is independent of $v$. If $q'_a$ denotes composition using
$r'$, normalization gives
\begin{equation}
q'_a(v)
=\frac{q_L(v)e^{a(cr(v)+\beta)}}
       {\sum_u q_L(u)e^{a(cr(u)+\beta)}}
=q_{ca}(v).
\label{eq:app-affine-invariance}
\end{equation}
A token-independent shift therefore leaves the teacher and its KL
displacement unchanged at every coefficient. More generally, whenever
the exact matching coefficient is unique, replacing $r$ by $r'$
changes it to $\alpha^\star/c$ and preserves the calibrated teacher.
The unregularized, uncapped local rule has the same property because
$\Var_{q_L}[r']=c^2\Var_{q_L}[r]$. This statement holds with the budget
fixed. Changing source-model temperatures can change the source KL
itself, while a fixed variance regularizer or coefficient cap can
break the coefficient-rescaling property.

\subsection{Loss Decomposition}
\label{app:loss-decomposition}

At the fixed prefix, let $p_\theta$ and $p_\theta^0$ be the student's
strictly positive image-conditioned and image-free distributions,
and define $r_\theta^S:=\log p_\theta-\log p_\theta^0$.
The coefficient $\alpha$ and teacher sources are fixed during the
student update. Substitution into the full-vocabulary reverse KL gives
\begin{align}
\KL(p_\theta\|q_\alpha)
&=\E_{p_\theta}
  [\log p_\theta-\log q_L-\alpha r]+A(\alpha)\nonumber\\
&=\underbrace{\E_{p_\theta}
  \left[\log\frac{p_\theta^0}{q_L}\right]}
  _{\text{language-prior contribution}}
 +\underbrace{\E_{p_\theta}[r_\theta^S-\alpha r]}
  _{\text{visual-residual contribution}}
 +A(\alpha).
\label{eq:app-loss-decomposition}
\end{align}
This identity holds for any fixed coefficient, including the locally
approximated coefficient. The first expectation is taken under
$p_\theta$, not $p_\theta^0$, so it is generally not
$\KL(p_\theta^0\|q_L)$. Neither contribution is individually guaranteed
to be nonnegative. The decomposition describes two contributions to
a single objective rather than two separately minimized divergences.
The image-free student distribution is introduced only for this
identity; evaluating the original reverse KL does not require an
additional image-free student forward pass.

Because the prefix, teacher sources, and coefficient are fixed,
$A(\alpha)$ has zero gradient with respect to $\theta$ during the
update. Derivatives through $\log p_\theta^0$ cancel between the two
contributions when their sum is differentiated consistently. These
claims apply to the fixed-prefix objective in
\eqref{eq:opd-loss}, where sampled prefixes are held fixed,
and do not differentiate through the rollout sampling distribution.

\noindent\textbf{Relation to the detached visual target.}
We now relate the visual-residual term in the decomposition above to
the detached visual target used in Decomposed
OPD~\citep{yoon2026decomposedopd}. Let $\bar\theta$ denote the student
parameters at which the target is constructed. The target combines the
student's image-free prior $p_{\bar\theta}^0$ with the teacher's visual
residual $r$, with the prior held fixed during differentiation:
\begin{equation}
q_{\mathrm{vis},\bar\theta}(v)
:=\frac{p_{\bar\theta}^0(v)e^{r(v)}}
        {\sum_u p_{\bar\theta}^0(u)e^{r(u)}},
\qquad
B_{\bar\theta}:=\log\sum_u p_{\bar\theta}^0(u)e^{r(u)}.
\label{eq:app-detached-visual-target}
\end{equation}
The corresponding KL loss is
\begin{equation}
\KL(p_\theta\|q_{\mathrm{vis},\bar\theta})
=\E_{p_\theta}
 [\log p_\theta-\log p_{\bar\theta}^0-r]+B_{\bar\theta}.
\label{eq:app-detached-visual-loss}
\end{equation}
At $\theta=\bar\theta$, its value is
$\E_{p_{\bar\theta}}[r_{\bar\theta}^S-r]+B_{\bar\theta}$,
which is the visual-residual contribution in
\eqref{eq:app-loss-decomposition} at $\alpha=1$, up to
normalization. However, this value correspondence does not by itself
imply gradient equivalence. In particular, if
$F(\theta):=\E_{p_\theta}[r_\theta^S-r]$ uses an undetached
$p_\theta^0$, differentiation yields
\begin{equation}
\begin{aligned}
\left.\nabla_\theta\KL(p_\theta\|q_{\mathrm{vis},\bar\theta})
\right|_{\theta=\bar\theta}
&=\left.\nabla_\theta F(\theta)\right|_{\theta=\bar\theta}\\
&\quad+\E_{p_{\bar\theta}}
 \left[\left.\nabla_\theta\log p_\theta^0
 \right|_{\theta=\bar\theta}\right].
\end{aligned}
\label{eq:app-visual-gradient-difference}
\end{equation}
The extra expectation need not vanish because it is taken under
$p_{\bar\theta}$. The target holds the student's prior fixed, whereas
$F(\theta)$ differentiates through it. Our reproduction settings and
results are reported in Appendix~\ref{app:decomposed-reproduction}.

\section{Practical Teacher Construction and Student Update}
\label{app:lego_implementation}
\label{app:implementation}
\label{sec:implementation}

We describe the numerical safeguards used in visual-residual
construction, coefficient estimation, and teacher normalization.
All expert branches are frozen and score the same student-generated
response prefix over a shared output vocabulary. The image-free branch
omits the image and its placeholder while retaining the system prompt,
question, and generated text. Below, we suppress the prefix index $t$ and write
$q_L=\operatorname{softmax}(z_L)$ and
$q_V^c=\operatorname{softmax}(z_V^c)$ for $c\in\{I,0\}$.

\subsection{Visual Residual Stabilization}
\label{app:residual-stabilization}

Ratios involving very small image-free probabilities can amplify
errors in tail predictions. We therefore floor low image-free logits,
while protecting selected tokens favored by the image.
Let $\mathcal N_m(q)$ denote the smallest descending-probability
nucleus with cumulative mass at least $m$. The floor is the boundary
logit of the original image-free distribution's $0.999$-mass nucleus,
\begin{equation}
f:=\min_{v\in\mathcal N_{0.999}(q_V^0)}z_V^0(v).
\label{eq:impl-null-floor}
\end{equation}
A token is protected if it lies in the original image-conditioned
$0.99$-mass nucleus, its probability increases with the image, and
its original image-free logit lies below the floor:
\begin{equation}
P:=\left\{v\in\mathcal N_{0.99}(q_V^I):
q_V^I(v)>q_V^0(v),\ z_V^0(v)<f\right\}.
\label{eq:impl-protected-set}
\end{equation}
The effective image-free logits and stabilized residual are
\begin{equation}
\begin{aligned}
\widetilde z_V^0(v)
&:=\begin{cases}
z_V^0(v),&v\in P,\\
\max\{z_V^0(v),f\},&v\notin P,
\end{cases}\\
\widetilde r(v)&:=z_V^I(v)-\widetilde z_V^0(v).
\end{aligned}
\label{eq:impl-stabilized-residual}
\end{equation}
Flooring reduces the residual for unprotected tokens with very low
image-free probability. Protection preserves the original contrast
for tokens satisfying all three conditions in
\eqref{eq:impl-protected-set}.

We then center the residual by subtracting its maximum,
\begin{equation}
u(v):=\widetilde r(v)-\max_{a\in\V}\widetilde r(a).
\label{eq:impl-centered-residual}
\end{equation}
This constant shift preserves the residual variance and the normalized
teacher, while keeping the transferred residual nonpositive.
The original branch probabilities are retained for support selection
and source-KL estimation; flooring is applied only when constructing
the transferred residual.

\subsection{Calibration Coefficient Estimation}
\label{app:practical-calibration}

To estimate the coefficient from high-probability candidates, we form
the union of the three original teacher nuclei,
\begin{equation}
S:=\mathcal N_m(q_L)\cup\mathcal N_m(q_V^I)
\cup\mathcal N_m(q_V^0).
\label{eq:union}
\end{equation}
We use a calibration support mass of $m=0.95$;
Appendix~\ref{app:support-mass-ablation} compares it with $m=0.99$.
Each branch is renormalized on the common set $S$: for
$q\in\{q_L,q_V^I,q_V^0\}$, define
$q^S(v):=q(v)/\sum_{a\in S}q(a)$ for $v\in S$.
The coefficient uses the original image/no-image contrast and the
variance of the stabilized residual under the receiving language prior,
\begin{equation}
\begin{aligned}
d_S&:=\KL(q_V^{I,S}\|q_V^{0,S}),\\
V_S&:=\Var_{q_L^S}[\widetilde r].
\end{aligned}
\label{eq:impl-coefficient-statistics}
\end{equation}

The union retains candidates assigned high probability by any of the
three branches. For every branch $q$, $q(S)\geq m$ and
$\operatorname{TV}(q,q^S)=1-q(S)\leq1-m$, where $q^S$ is extended
by zero outside $S$. Thus $m$ controls the excluded model probability
mass. We use $d_S$ as a conditional calibration statistic to reduce
dependence on low-probability likelihood ratios; high retained mass
alone does not guarantee closeness to the full-vocabulary source KL.

We use $\rho=1$ by default, regularize the denominator, and cap the
coefficient:
\begin{equation}
\widehat\alpha_S
:=\min\!\left\{\alpha_{\max},
\sqrt{\frac{2\rho d_S}{V_S+\epsilon}}\right\},
\qquad \epsilon=10^{-8},\quad \alpha_{\max}=6.
\label{eq:practical-alpha}
\end{equation}
The regularizer makes the expression defined when the estimated
variance is zero, and the cap limits amplification when that variance
is small. This cap is a practical safeguard against large
coefficients, not a way to make an unattainable KL target attainable.
We use the same $\alpha_{\max}=6$ across adaptive training variants.
A zero estimated source KL gives a zero coefficient.
We compute the coefficient statistics in FP32 and clamp numerical
KL and variance estimates to be nonnegative. As a numerical safeguard,
NaN and negative-infinite coefficients are replaced by zero and
positive-infinite coefficients by $\alpha_{\max}$ before clamping
to $[0,\alpha_{\max}]$.

This is a regularized, capped second-order estimate, rather than an
exact solution of the KL-budget equation. Support restriction and
residual stabilization also affect the approximation, so the practical
teacher need not attain the full-vocabulary displacement
$\rho\KL(q_V^I\|q_V^0)$. Its induced KL displacement is evaluated in
Appendix~\ref{app:calibration-diagnostics}.

\subsection{Teacher Normalization and Student Update}
\label{app:sparse-opd}

The final teacher is normalized over the entire output vocabulary,
\begin{equation}
q_T(v)
:=\frac{\exp\{z_L(v)+\widehat\alpha_S u(v)\}}
{\sum_{a\in\V}\exp\{z_L(a)+\widehat\alpha_S u(a)\}}.
\label{eq:impl-full-teacher}
\end{equation}
Thus $S$ restricts coefficient estimation, not teacher support.

The variational characterization in
Appendix~\ref{app:composition-proof} applies to any finite evidence
score. Consequently, the practical teacher in
\eqref{eq:impl-full-teacher} uniquely maximizes
$\widehat\alpha_S\E_q[u]-\KL(q\|q_L)$ over $q\in\Delta_{\V}$.
Stabilization changes the transferred score and its coefficient while
preserving this generalized Bayesian form.

We evaluate the full-vocabulary log-normalizer using FP32
log-sum-exp reductions over vocabulary chunks, combined with
log-add-exp. This preserves the full normalization without
materializing a full FP32 tensor of composed teacher logits.
The distillation temperature is $1$.

\noindent\textbf{Adopted top-16 OPD update.}
We follow the student-top-16 OPD setting of
\citet{li2026rethinkingopd}, using the composed $q_T$ as the teacher.
For reproducibility, we specify the support and gradient placement.
Let $\bar\theta$ denote the current scoring parameters and let
$p_\theta$ be the image-conditioned student distribution at the fixed
prefix. We select $k=16$ tokens using detached student scores,
\begin{equation}
U:=\operatorname{TopK}_{v\in\V}p_{\bar\theta}(v),
\qquad
w(v):=\frac{p_{\bar\theta}(v)}
{\sum_{a\in U}p_{\bar\theta}(a)},\quad v\in U.
\label{eq:impl-student-support}
\end{equation}
Only the weights are renormalized on $U$. The student and teacher
log probabilities retain their full-vocabulary normalizers.
The token-level surrogate defining the update is
\begin{equation}
\widehat{\mathcal L}(\theta)
:=\sum_{v\in U}\sg\!\left[
w(v)\log\frac{p_{\bar\theta}(v)}{q_T(v)}
\right]\log p_\theta(v).
\label{eq:sparse}
\end{equation}
The weights and log ratios are recomputed at the current scoring
parameters and detached before differentiation. Gradients flow
only through the final student log-probability factor; the prefix,
expert predictions, teacher target, and selected support are fixed.

The teacher-derived support $S$ is used only for coefficient
estimation, whereas the student-derived support $U$ selects the terms
in this update. Both student and teacher log probabilities remain
normalized over $\V$.

\newpage
\section{Experimental Details}
\label{app:experimental-details}
 
\subsection{Models and Training}
\label{app:training_settings}
 
Table~\ref{tab:lego_settings} summarizes the reference training
configuration and the sampled-decoding settings for reasoning evaluation.
Perception benchmarks use the separate settings in
Appendix~\ref{app:perception-eval}.
Baseline-specific adaptations are described in
Appendix~\ref{app:baseline_impl}.
 
\begin{table}[H]
\centering
\caption{
Training and evaluation settings used in our experiments.
The training configuration corresponds to the main LEGO-OPD setup and
serves as the reference configuration for the baseline comparisons.
Baselines retain these settings whenever compatible with their defining
formulations, with method-specific deviations described in
Appendix~\ref{app:baseline_impl}.
The reasoning-evaluation settings apply to sampled decoding and are
held fixed across compared methods. Perception settings are given in
Appendix~\ref{app:perception-eval}.
}
\label{tab:lego_settings}
 
\footnotesize
\renewcommand{\arraystretch}{1.10}
\setlength{\tabcolsep}{3.5pt}
 
\begin{tabular*}{\linewidth}{@{\extracolsep{\fill}}
    p{0.20\columnwidth}
    p{0.39\columnwidth}
    p{0.31\columnwidth}
@{}}
\toprule
\textbf{Category} & \textbf{Hyperparameter} & \textbf{Value} \\
\midrule
 
\multirow{3}{*}{\textbf{Models}}
& Student & Qwen3-VL-2B-Instruct \\
& Language expert & Qwen3-4B-Instruct-2507 \\
& Grounding expert & Qwen3-VL-4B-Instruct \\
\midrule
 
\multirow{8}{*}{\textbf{Training}}
& Training data & MMFineReason, 54,840 examples \\
& Student update & Full-parameter fine-tuning \\
& Training steps & 200 \\
& Responses per update & 128 \\
& Optimizer & AdamW \\
& Learning rate & $1\times10^{-6}$ \\
& Weight decay & $0.01$ \\
& LR schedule & Constant, no warmup \\
\midrule
 
\multirow{7}{*}{\textbf{Training rollout}}
& Maximum prompt length & 12,384 tokens \\
& Maximum response length & 12,384 tokens \\
& Temperature & $1.0$ \\
& Top-$p$ & $1.0$ \\
& Top-$k$ & Unrestricted \\
& Min-$p$ & $0$ \\
& Repetition penalty & $1.0$ \\
\midrule
 
\multirow{6}{*}{\textbf{LEGO-OPD}}
& OPD support & Student top 16 \\
& KL-budget scale & $\rho=1$ \\
& Calibration support mass & $0.95$ \\
& Coefficient cap & $\alpha_{\max}=6$ \\
& Null-floor mass & $0.999$ \\
& Image-protection mass & $0.99$ \\
\midrule
 
\multirow{8}{*}{\textbf{Reasoning eval.}}
& Temperature & $0.9$ \\
& Top-$p$ & $0.95$ \\
& Top-$k$ & $100$ \\
& Min-$p$ & $0.01$ \\
& Repetition penalty & $1.03$ \\
& Maximum response length & 12,384 tokens \\
& Stop sequence & \texttt{</answer>} \\
& Inference repetitions & 5 (seeds 42--46) \\
\bottomrule
\end{tabular*}
\end{table}
 
\noindent\textbf{Training.}
We fully fine-tune the student while keeping the language and grounding
experts frozen. At each update, the current student generates on-policy
responses, which are then scored by the expert branches.
We do not use teacher-generated responses, ground-truth answers, or reference
solutions as training targets.
 
We use the prompt template shown in Figure~\ref{fig:prompt-template}
throughout training.
The language expert receives the textual problem only, while the grounding
expert is evaluated with and without the image under the same generated
response prefix.
LEGO-OPD uses the student-top-16 surrogate together with the practical
calibration and residual stabilization in Appendix~\ref{app:implementation}.
 
\subsection{Benchmarks and Metrics}
\label{app:benchmarks}
We evaluate multimodal reasoning on MATH-Vision, MathVerse, MathVista,
MMMU-Pro, RBench-M, and the multimodal split of OlympiadBench, and
text-only reasoning on MATH-500, MathVerse Text Only, MMLU-Pro, RBench-T,
and the text-only split of OlympiadBench. Visual perception is evaluated
on BLINK~\citep{fu2024blink} and QBench1,2~\citep{wu2023qbench,zhang2024qbenchplus}.
For QBench1,2, we use the perception-focused Group A questions, namely
the A1 perception questions from the development splits (1,495 from
QBench 1 and 1,000 from QBench 2), and report accuracy over the pooled
2,495 questions. The six multimodal and five text-only reasoning scores
are averaged within their respective groups. BLINK and QBench1,2 are
reported separately and excluded from both reasoning averages.
 
\subsection{Evaluation Protocol}
\label{app:evaluation}
 
Reasoning and perception benchmarks follow separate protocols.
Reasoning benchmarks use sampled decoding with the evaluation settings
in Table~\ref{tab:lego_settings} (Appendix~\ref{app:reasoning-eval}).
Perception benchmarks use greedy direct-answer generation with the
settings in Table~\ref{tab:perception_settings}
(Appendix~\ref{app:perception-eval}).
 
\subsubsection{Reasoning Benchmarks}
\label{app:reasoning-eval}
 
\noindent\textbf{Inputs.}
All reasoning benchmarks except MMLU-Pro use the shared system prompt in
Figure~\ref{fig:prompt-template}. The user prompt contains each question
in its benchmark's native format, together with any benchmark-specific
instructions. MathVerse and R-Bench add their own instructions to the
question, and OlympiadBench uses its multimodal or text-only prompt
template. MMLU-Pro uses category-conditioned five-shot prompts without a
system prompt.
 
\noindent\textbf{Decoding and reporting.}
We sample five responses per question with evaluation seeds 42--46 under
the reasoning-evaluation settings in Table~\ref{tab:lego_settings}.
The main text reports the mean accuracy over these samples, and
Appendix~\ref{app:supplementary-experiments} reports the same results
with standard deviations (mean $\pm$ SD).
Appendix~\ref{app:greedy-results} additionally reports greedy-decoding
results. These SDs reflect sampling variability across evaluation seeds
for each trained checkpoint. Benchmark averages are unweighted means and
are reported as means only.
 
\noindent\textbf{Answer extraction and scoring.}
For multiple-choice tasks, predictions are first mapped to answer
options by deterministic rules. Only when this mapping remains
unresolved do we use \texttt{gpt-5.6-luna} as a gold-blind semantic
option mapper, with the prompt in Figure~\ref{fig:prompt-luna}.
The mapper receives only the extracted final answer and the candidate
options.
 
MMLU-Pro is scored by a dedicated deterministic A--J evaluator.
OlympiadBench uses its benchmark-specific mathematical answer
normalization and automatic scorer, and MathVista and MATH-500 use their
task-specific answer normalization. MathVerse uses its dedicated answer
extraction and correctness-judging pipeline, calling
\texttt{gpt-5.6-luna} when an LLM-based judgment is required.
 
\subsubsection{Perception Benchmarks}
\label{app:perception-eval}
 
\noindent\textbf{Inputs and decoding.}
BLINK uses its dataset-provided prompt, and QBench1,2 uses the question
followed by its A--D choices. We append the instruction listed in
Table~\ref{tab:perception_settings}, use the Qwen3-VL processor and chat
template without a system prompt, and generate one response per question
with greedy decoding (temperature $0$) and seed 42.
 
\noindent\textbf{Scoring.}
The option letter is extracted from each response by deterministic rules
and compared with the gold letter by exact match. Responses without an
extractable letter are counted as incorrect. For BLINK, the letter is
taken from explicit answer patterns or a standalone final choice,
whereas for QBench1,2 it is taken from the beginning of the response.
BLINK accuracy is computed over all 1,901 validation questions, and
QBench1,2 accuracy over the 2,495 pooled Group A questions.
 
\begin{table}[H]
\centering
\caption{Settings for direct-answer perception evaluation. Both
benchmarks use one greedy response per question (temperature $0$,
seed 42) and no system prompt. Max.\ tokens is the maximum number of
generated tokens.}
\label{tab:perception_settings}
\footnotesize
\renewcommand{\arraystretch}{1.15}
\setlength{\tabcolsep}{4pt}
\begin{tabular*}{\linewidth}{@{\extracolsep{\fill}}
    >{\raggedright\arraybackslash}p{0.14\linewidth}
    >{\raggedright\arraybackslash}p{0.12\linewidth}
    >{\raggedright\arraybackslash}p{0.07\linewidth}
    >{\raggedright\arraybackslash}p{0.34\linewidth}
    >{\raggedright\arraybackslash}p{0.21\linewidth}
@{}}
\toprule
\textbf{Benchmark} & \textbf{Questions} & \textbf{Max. tokens} & \textbf{Appended instruction} & \textbf{Scoring} \\
\midrule
BLINK validation & 1,901 & 128 & ``Reply with only the letter of the correct choice (A, B, C, ...). Do not explain.'' & Rule-based letter extraction and exact match, accuracy over all questions \\
QBench1,2 Group A (A1 perception, dev) & 2,495 (1,495 + 1,000) & 16 & ``Reply with only the letter of the correct choice (A, B, C, or D). Do not explain.'' & Leading-letter extraction and exact match, accuracy over all pooled questions \\
\bottomrule
\end{tabular*}
\end{table}
 
\subsection{Prompt Templates}
\label{app:prompts}
 
Figures~\ref{fig:prompt-template} and~\ref{fig:prompt-luna} give the QA
template used for training and reasoning evaluation and the semantic
option-mapping prompt. Benchmark-specific formats and the separate
perception settings are described in
Appendices~\ref{app:reasoning-eval} and~\ref{app:perception-eval}.
 
\begin{figure}[!h]
\small
\centering
\begin{tcolorbox}[
    colback=gray!5!white,
    colframe=black,
    title=\textbf{Common QA Prompt},
    boxrule=0.6pt,
    arc=1.5mm,
    left=7pt,
    right=7pt,
    top=6pt,
    bottom=6pt
]
 
\textbf{System Prompt:}\quad
Please reason step by step, and put your final answer within
\texttt{<answer>\textbackslash boxed\{your final answer\}</answer>}.
 
\vspace{0.6em}
 
\textbf{User Prompt:}\quad
\{Question and answer choices\}
 
\end{tcolorbox}
 
\caption{
Common QA prompt used for training and reasoning evaluation.
Benchmark-specific input formatting is preserved during evaluation.
MMLU-Pro uses its native five-shot prompt without the shared system prompt.
Perception settings are given in Appendix~\ref{app:perception-eval}.
}
\label{fig:prompt-template}
\end{figure}
 
\begin{figure}[!h]
\small
\centering
\begin{tcolorbox}[
    colback=gray!5!white,
    colframe=black,
    title=\textbf{Semantic Option-Mapping Prompt for Scoring},
    boxrule=0.6pt,
    arc=1.5mm,
    left=7pt,
    right=7pt,
    top=6pt,
    bottom=6pt
]
 
\textbf{User Prompt:}\quad
Map the provided final answer to an option, or null. This is ONLY answer interpretation, not solving the original question. No correct answer key is supplied. \\
Rules: \\
- A clear standalone option letter selects that option. \\
- A clear final value or description can select the unique semantically equivalent option. \\
- If BOTH a letter and answer content are stated, they MUST agree with that option's content.  Otherwise output null. Example: B:52 is null when B contains 64, even if another option contains 52.\\
- A letter plus reasoning can be interpreted only when the stated answer content is consistent; do not evaluate unrelated intermediate reasoning or solve a question.\\
- Ignore Markdown, answer tags, boxing, harmless punctuation, and equivalent mathematical notation.\\
- If the supplied final regions conflict, contain competing answers, or cannot be mapped with certainty, output null.\\
- Do not infer an option from mathematical knowledge of what the answer SHOULD be.
Return ONLY a JSON object \{"option": "A"\} (using the selected letter) or \{"option": null\}.\\
 
Data:
\{options and model output\}
 
\end{tcolorbox}
 
\caption{
Prompt used by GPT-5.6 Luna for gold-blind semantic option mapping during scoring.
The judge receives only the extracted final answer and candidate options,
without the original question, image, or reference answer.
}
\label{fig:prompt-luna}
\end{figure}
 
\subsection{Baseline Implementations}
\label{app:baseline_impl}
 
We use the LEGO-OPD training setup in Table~\ref{tab:lego_settings}
whenever compatible with each baseline. Unless otherwise noted, we retain
the objectives and method-specific hyperparameters reported in the original
papers, while adapting the model configuration, data, and training budget to
our experimental setting.
 
\noindent\textbf{Standard OPD.}
Standard OPD distills directly from a frozen Qwen3-VL-4B-Instruct teacher
conditioned on the image and question. We use the same MMFineReason data,
rollout configuration, and student-top-16 OPD surrogate as LEGO-OPD.
 
\noindent\textbf{VA-OPD.}
VA-OPD retains its multi-response training scheme, generating four responses
per prompt and therefore using 32 unique prompts per update under the common
128-response budget. Following the reported configuration, the VLM teacher
scores each response with the original and pixelated image to estimate visual
importance. We use a degradation ratio of $0.1$, assign the highest-advantage
20\% of token positions to the high-importance group, and give equal weight
to the high- and low-importance groups. Training uses full-vocabulary reverse
KL rather than the student-top-16 surrogate.
 
\noindent\textbf{M-OPD.}
M-OPD was originally formulated for domain-routed multi-teacher
distillation over text domains. We adapt it to the multimodal setting by
routing examples from our multimodal training data (the 54,840-example
MMFineReason subset in Section~\ref{sec:experimental-setup}) to the VLM
teacher and DeepMath-103K~\citep{he2025deepmath103k} examples to the LLM
teacher at a $1{:}1$ ratio. Following M-OPD's policy-gradient
formulation, we use sampled-token distillation with the reported
advantage clipping magnitude of $5$ instead of a top-$K$ objective.
 
\noindent\textbf{H-OPD.}
H-OPD combines a VLM teacher conditioned on the image and question with an
LLM teacher conditioned on the released description-augmented input.
Following its reported configuration, each teacher distribution is
normalized over its top-eight tokens, and the two distributions are combined
using entropy-based confidence weights with temperature $0.5$.
The student is trained with reverse KL over the union of the two teacher
supports. This support contains at most 16 tokens and is distinct from the
student-top-16 support used by Standard OPD and LEGO-OPD.

\section{Analysis of Source-Referenced Calibration}
\label{app:practical-validation}

This section analyzes the calibration rule in three parts.
Appendix~\ref{app:toy_example} uses a three-token example to show why a
fixed coefficient fails when the language prior differs from the
grounding expert's image-free prior.
Appendices~\ref{app:calibration-results}, \ref{app:zero-budget-controls},
and~\ref{app:support-mass-ablation} vary one component of the calibration
in training: the coefficient rule and KL-budget scale, the zero-budget
limit, and the calibration support mass.
Appendices~\ref{app:calibration-diagnostics}
and~\ref{app:recomposed-training} evaluate the practical calibration rule,
first against numerical KL matching and then through distillation from a
recomposed VLM teacher.

\subsection{A Three-Token Calibration Example}
\label{app:toy_example}

Section~\ref{sec:method} calibrates the evidence scale so that the composed
teacher's displacement from the language prior matches the scaled source KL
in \eqref{eq:match}. The visual residual is measured against the grounding
expert's own image-free prediction $q_V^0$ but applied to the language
expert's prior $q_L$. When the two priors coincide, $q_L=q_V^0$, exact
calibration gives $\alpha^\star=1$ and recovers $q_V^I$
\eqref{eq:app-self-prior} (Appendix~\ref{app:special-cases}). We use a
three-token example in which the LLM prior differs substantially from the
VLM's image-free prior to show why a single fixed coefficient cannot meet
the calibration reference across prefixes (Figure~\ref{fig:toy_calibration}).
Each row fixes a prefix. The grounding expert provides $q_V^0$ and $q_V^I$,
the language expert provides $q_L$, and the composed teacher is
$q_\alpha(v)\propto q_L(v)\exp\{\alpha r(v)\}$ as in \eqref{eq:family}.
We use $\rho=1$, so exact calibration requires
$\KL(q_{\alpha^\star}\|q_L)=\KL(q_V^I\|q_V^0)$.

\noindent\textbf{Reasoning prefix.}
The top row of Figure~\ref{fig:toy_calibration} shows a prefix at which
the image is nearly irrelevant. The grounding expert changes only at a
low-mass token, where $q_V(v_1)$ moves from $0.01$ to $0.02$, and its
argmax stays at $v_2$. The source KL is therefore small,
$\KL(q_V^I\|q_V^0)=0.004$. The residual $r(v_1)=\log 2$ is nevertheless
large, because it is a ratio. With $\alpha=1$, this ratio is applied to a
language prior that already assigns $0.35$ to $v_1$. The teacher's probability
of $v_1$ rises to $0.52$, which overturns the language expert's preferred
token $v_2$. The resulting displacement is $\KL(q_{\alpha=1}\|q_L)=0.062$,
or $15.7$ times the source KL. Calibration selects $\alpha^\star=0.26$, which
matches the source KL and preserves $v_2$.

\noindent\textbf{Grounding prefix.}
The bottom row of Figure~\ref{fig:toy_calibration} shows a prefix at which
the image determines the answer. The grounding expert moves from $v_2$ to
$v_1$, with $q_V(v_1)$ increasing from $0.20$ to $0.80$, which gives
$\KL(q_V^I\|q_V^0)=0.832$. The language expert cannot see the image and
assigns only $0.03$ to $v_1$. With $\alpha=1$, the teacher moves by only
$\KL(q_{\alpha=1}\|q_L)=0.546$, or $0.66$ times the source KL, and still
prefers $v_2$. Calibration selects $\alpha^\star=1.15$, and the teacher
follows the image.

\noindent\textbf{Discussion.}
The two prefixes require coefficients that differ by more than a factor of
four, so no single fixed value suits both. Raising the fixed coefficient to
$\alpha=2$ corrects the grounding prefix ($q_\alpha(v_1)=0.89$), but it
increases the displacement at the reasoning prefix to $60$ times the source KL
($q_\alpha(v_1)=0.69$). This pattern mirrors
Table~\ref{tab:calibration_ablation}, where larger fixed coefficients improve
perception but reduce text-only reasoning. Source-referenced calibration
avoids this trade-off by tying the displacement at each prefix to how much the
image changed the grounding expert itself.

\begin{figure}[H]
  \centering
  \includegraphics[width=\linewidth]{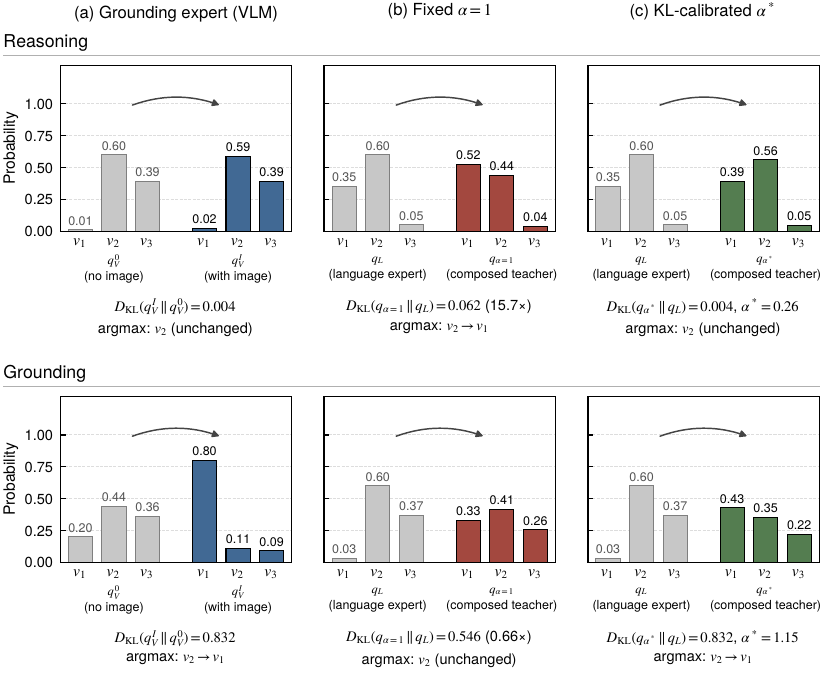}
\caption{\textbf{Illustrative example: a fixed coefficient cannot fit every
  prefix.} Gray bars show the next-token distribution before the update
  ($q_V^0$ or $q_L$), and colored bars show it after the update ($q_V^I$ or
  the composed teacher). \emph{Top:} at a reasoning prefix, the image barely
  changes the grounding expert ($\KL(q_V^I\|q_V^0)=0.004$). With $\alpha=1$,
  however, the teacher moves $15.7\times$ the source KL, and its argmax
  changes from $v_2$ to $v_1$. \emph{Bottom:} at a grounding prefix, the
  image determines the answer ($\KL(q_V^I\|q_V^0)=0.832$). With $\alpha=1$,
  the teacher reaches only $0.66\times$ the source KL and keeps $v_2$.
  KL calibration with $\rho=1$ selects $\alpha^\star=0.26$ and $1.15$,
  respectively, which match the source KL in both cases. Multiples in
  parentheses are relative to the source KL.}
  \label{fig:toy_calibration}
\end{figure}

\subsection{Fixed and Adaptive Calibration}
\label{app:controlled-ablations}
\label{app:calibration-results}

\noindent\textbf{Fixed-coefficient ablation.}
We compare adaptive calibration at $\rho=1$ with constant coefficients
$a\in\{1,2,4\}$ using the same student and the same 4B language and
grounding experts as in Table~\ref{tab:lego_settings}. The only change
is to replace the prefix-dependent coefficient $\widehat\alpha_{S,t}$
with $a$ at every response position:
\begin{equation}
q_{T,t}^{\mathrm{fixed}}
:=\operatorname{softmax}_{\V}\bigl(z_{L,t}+a u_t\bigr).
\label{eq:app-fixed-teacher}
\end{equation}
All runs retain the residual stabilization, teacher normalization,
student-top-16 update, and training and evaluation settings in
Appendices~\ref{app:implementation} and~\ref{app:training_settings}.
Only coefficient estimation is bypassed in the fixed variants.

\noindent\textbf{KL-budget scale.}
For the adaptive variants, we compare $\rho\in\{1,2,4\}$ while
retaining the remaining settings in Table~\ref{tab:lego_settings}.
All variants use the practical coefficient estimator in
\eqref{eq:practical-alpha}, with calibration support mass
$m=0.95$, variance regularizer $\epsilon=10^{-8}$, and coefficient
cap $\alpha_{\max}=6$. Changing $\rho$ rescales the source-KL budget
before coefficient estimation. Results for all reasoning and
perception benchmarks are reported in
Table~\ref{tab:calibration_ablation}.
Table~\ref{tab:calibration-seed-results} adds evaluation SDs for all
reasoning results in Table~\ref{tab:calibration_ablation}, using the
reporting protocol in Appendix~\ref{app:evaluation}.

\begingroup
\begin{table}[H]
\centering
\caption{\textbf{Calibration ablations with evaluation variability.}
All runs use the same 2B student and 4B language and grounding experts
for 200 training steps. Fixed composition uses constant $\alpha$,
while adaptive calibration uses KL-budget scale $\rho$.
Scores are mean $\pm$ SD over five evaluation sampling seeds.
Averages report means only.}
\label{tab:calibration-seed-results}
\footnotesize
\setlength{\tabcolsep}{3pt}
\renewcommand{\arraystretch}{1.12}
\renewcommand{\theadfont}{\footnotesize\bfseries}
\renewcommand{\theadgape}{\Gape[1pt]}
\resizebox{\linewidth}{!}{%
\begin{tabular}{@{}lcccccc@{}}
\toprule
& \multicolumn{3}{c}{\textbf{Fixed coefficient}} & \multicolumn{3}{c}{\textbf{Adaptive calibration}} \\
\cmidrule(lr){2-4}\cmidrule(lr){5-7}
\textbf{Benchmark} & $\alpha=1$ & $\alpha=2$ & $\alpha=4$ & $\rho=1$ & $\rho=2$ & $\rho=4$ \\
\midrule
\multicolumn{7}{c}{\textbf{Multimodal Reasoning}} \\
\midrule
MATH-Vision & $31.91\pm1.09$ & $31.25\pm2.58$ & $30.00\pm1.42$ & $33.36\pm2.56$ & $34.28\pm2.06$ & $34.01\pm0.76$ \\
MathVerse (Total) & $53.46\pm0.44$ & $54.29\pm0.50$ & $53.41\pm0.69$ & $56.96\pm0.20$ & $57.19\pm0.44$ & $57.51\pm0.51$ \\
MathVista & $64.58\pm0.72$ & $68.98\pm0.79$ & $69.36\pm0.33$ & $69.62\pm0.72$ & $70.58\pm1.02$ & $70.50\pm0.55$ \\
MMMU-Pro & $39.48\pm0.77$ & $38.87\pm0.66$ & $37.45\pm0.64$ & $40.98\pm1.14$ & $41.38\pm0.72$ & $41.36\pm0.61$ \\
RBench-M & $26.89\pm1.63$ & $24.03\pm0.98$ & $17.62\pm0.78$ & $29.17\pm1.20$ & $29.20\pm1.49$ & $28.39\pm2.51$ \\
OlympiadBench-MM & $38.27\pm2.03$ & $41.87\pm2.72$ & $41.73\pm3.70$ & $44.93\pm2.24$ & $45.07\pm3.85$ & $45.33\pm1.56$ \\
\midrule
\textbf{Multimodal Avg.} & 42.43 & 43.22 & 41.60 & 45.84 & 46.28 & 46.18 \\
\midrule
\multicolumn{7}{c}{\textbf{Text Reasoning}} \\
\midrule
MATH-500 & $82.24\pm1.10$ & $82.08\pm1.12$ & $81.52\pm1.54$ & $82.40\pm0.99$ & $82.36\pm0.84$ & $82.52\pm0.88$ \\
MathVerse (Text Only) & $59.06\pm0.72$ & $58.63\pm1.14$ & $57.94\pm0.68$ & $59.59\pm0.99$ & $60.23\pm0.40$ & $60.46\pm0.50$ \\
MMLU-Pro & $47.40\pm0.47$ & $47.16\pm0.25$ & $47.10\pm0.10$ & $47.15\pm0.05$ & $47.54\pm0.18$ & $47.38\pm0.26$ \\
RBench-T & $38.21\pm1.94$ & $35.52\pm0.70$ & $32.85\pm0.64$ & $38.59\pm1.19$ & $40.27\pm1.99$ & $38.08\pm0.59$ \\
OlympiadBench-Text & $52.20\pm1.33$ & $51.72\pm1.42$ & $51.19\pm1.20$ & $52.31\pm0.89$ & $53.00\pm0.77$ & $51.78\pm1.09$ \\
\midrule
\textbf{Text Avg.} & 55.82 & 55.02 & 54.12 & 56.01 & 56.68 & 56.04 \\
\bottomrule
\end{tabular}%
}
\end{table}
\endgroup

\subsection{Language-Only Supervision at Zero KL Budget}
\label{app:zero-budget-controls}

The dataset released by H-OPD~\citep{yin2026hopd} provides
image-and-question inputs for the VLM and
question-and-image-description inputs for the LLM.
We refer to these formats as \emph{multimodal data} and
\emph{text-only data}, respectively.

To examine the role of grounding supervision, we compare two
language-only controls at $\rho=0$, where the teacher reduces to
the language expert, $q_{T,t}=q_{L,t}$.
One control uses multimodal data, matching the data configuration
of the $\rho=1$ runs. The other uses text-only data, giving the LLM
teacher access to image descriptions.
Apart from $\rho$ and the data format, both controls follow the
$\rho=1$ training and evaluation settings, including the same
2B student initialization, 4B language expert, and 200 training steps.
Table~\ref{tab:zero-budget-controls} compares these controls with
the initial student and two $\rho=1$ configurations from
Table~\ref{tab:expert_combinations_full}.
Both configurations use multimodal data and the same 4B language
expert, with either the frozen initial student or the 4B VLM
providing the visual residual.

\begingroup
\begin{table}[H]
\centering
\caption{\textbf{Language-only supervision and expert composition.}
Base 2B is the initial student before distillation.
Student (2B) denotes its frozen copy used as the grounding expert.
Scores are accuracies (\%), reported as mean $\pm$ SD over five
evaluation sampling seeds. Benchmark averages report means only.}
\label{tab:zero-budget-controls}
\footnotesize
\setlength{\tabcolsep}{3pt}
\renewcommand{\arraystretch}{1.12}
\renewcommand{\theadfont}{\footnotesize\bfseries}
\renewcommand{\theadgape}{\Gape[1pt]}
\resizebox{\linewidth}{!}{%
\begin{tabular}{@{}lccccc@{}}
\toprule
& \thead{Base 2B}
& \multicolumn{4}{c}{\textbf{Distilled 2B students}} \\
\cmidrule(lr){3-6}
\textbf{Training data} & -- & \thead{Multimodal} & \thead{Text-only}
& \thead{Multimodal} & \thead{Multimodal} \\
Grounding expert & -- & -- & -- & Student (2B) & VLM 4B \\
KL-budget scale & -- & $\rho=0$ & $\rho=0$ & $\rho=1$ & $\rho=1$ \\
\midrule
\multicolumn{6}{c}{\textbf{Multimodal Reasoning}} \\
\midrule
MATH-Vision
& $27.96\pm1.68$ & $21.05\pm2.17$ & $33.75\pm1.80$
& $33.03\pm1.35$ & $33.36\pm2.56$ \\
MathVerse (Total)
& $51.51\pm0.48$ & $30.25\pm0.18$ & $54.86\pm0.53$
& $54.17\pm0.65$ & $56.96\pm0.20$ \\
MathVista
& $66.56\pm0.71$ & $32.22\pm0.74$ & $66.34\pm0.75$
& $69.42\pm1.13$ & $69.62\pm0.72$ \\
MMMU-Pro
& $37.40\pm0.66$ & $23.93\pm0.14$ & $39.58\pm0.57$
& $41.93\pm0.69$ & $40.98\pm1.14$ \\
RBench-M
& $19.61\pm0.95$ & $25.86\pm0.66$ & $26.95\pm0.65$
& $28.48\pm1.95$ & $29.17\pm1.20$ \\
OlympiadBench-MM
& $37.60\pm3.42$ & $32.80\pm1.28$ & $39.73\pm1.67$
& $43.07\pm3.55$ & $44.93\pm2.24$ \\
\midrule
\textbf{Multimodal Avg.} & 40.11 & 27.69 & 43.54 & 45.02 & 45.84 \\
\midrule
\multicolumn{6}{c}{\textbf{Text Reasoning}} \\
\midrule
MATH-500
& $80.44\pm0.95$ & $82.44\pm0.89$ & $82.08\pm0.97$
& $82.36\pm0.57$ & $82.40\pm0.99$ \\
MathVerse (Text Only)
& $57.18\pm1.01$ & $60.46\pm0.74$ & $59.57\pm0.86$
& $59.92\pm0.61$ & $59.59\pm0.99$ \\
MMLU-Pro
& $43.59\pm0.17$ & $48.11\pm0.33$ & $47.44\pm0.44$
& $47.51\pm0.35$ & $47.15\pm0.05$ \\
RBench-T
& $30.20\pm0.31$ & $38.21\pm1.51$ & $40.05\pm0.59$
& $38.46\pm1.30$ & $38.59\pm1.19$ \\
OlympiadBench-Text
& $48.04\pm1.42$ & $52.61\pm0.83$ & $52.73\pm0.58$
& $52.67\pm1.35$ & $52.31\pm0.89$ \\
\midrule
\textbf{Text Avg.} & 51.89 & 56.37 & 56.37 & 56.18 & 56.01 \\
\bottomrule
\end{tabular}%
}
\end{table}
\endgroup

These results suggest that text-only reasoning gains do not by
themselves ensure effective multimodal transfer.
The benefit of incorporating the frozen student's visual residual
supports a complementary role for image-conditioned supervision.
In this setting, factor composition can therefore benefit from
the student's existing visual signal without requiring a stronger
grounding model.

\subsection{Sensitivity to Calibration Support Mass}
\label{app:support-mass-ablation}

To assess sensitivity to the calibration support, we compare nucleus
masses $m\in\{0.95,0.99\}$ in the union construction of
\eqref{eq:union}. Both runs use the 4B language and grounding
experts, the same 2B student initialization, $\rho=1$, and 200 training
steps. All other training, teacher-construction, and evaluation settings
are held fixed. Only the mass threshold used to construct the
coefficient-estimation support $S$ is changed; the student top-16
support rule and full-vocabulary teacher normalization are unchanged.

Figure~\ref{fig:support-mass-training} shows the training diagnostics.
Both settings produce broadly similar decreases in training and top-$k$
OPD losses. The entropy-gap and support-mass diagnostics also follow
similar trajectories, with differences in the top-$k$ KL estimate
at several stages of training.

\begin{figure}[H]
\centering
\includegraphics[width=\linewidth]{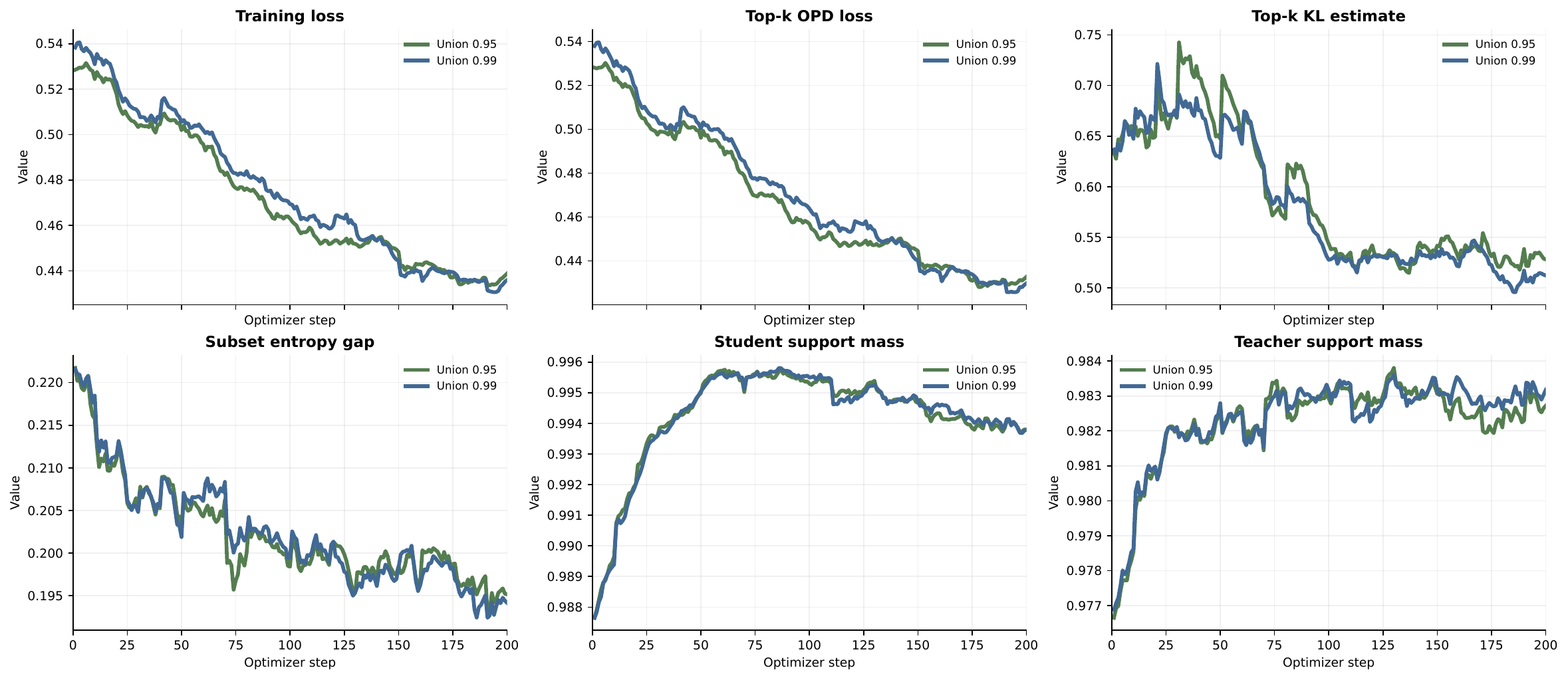}
\caption{\textbf{Training diagnostics for calibration support masses 0.95 and 0.99.}
The 4B--4B composition is trained at $\rho=1$ with all other settings
fixed. Panels compare training loss, top-$k$ OPD loss,
top-$k$ KL estimate, subset entropy gap,
student support mass, and teacher support mass over 200 optimizer steps.}
\label{fig:support-mass-training}
\end{figure}

\begin{table}[H]
\centering
\caption{\textbf{Effect of calibration support mass on final reasoning performance.}
Both runs use the same 2B student and 4B language and grounding experts
for 200 steps at $\rho=1$. Accuracies (\%) are mean $\pm$ SD across five evaluation
sampling seeds. Averages are unweighted means of the benchmark means.}
\label{tab:support-mass-results}
\small
\setlength{\tabcolsep}{5pt}
\renewcommand{\arraystretch}{1.12}
\begin{tabular*}{\linewidth}{@{\extracolsep{\fill}}lcc@{}}
\toprule
\textbf{Benchmark} & $m=0.95$ (default) & $m=0.99$ \\
\midrule
\multicolumn{3}{c}{\textbf{Multimodal Reasoning}} \\
\midrule
MATH-Vision & $33.36\pm2.56$ & $35.20\pm1.45$ \\
MathVerse (Total) & $56.96\pm0.20$ & $56.97\pm0.54$ \\
MathVista & $69.62\pm0.72$ & $69.62\pm0.51$ \\
MMMU-Pro & $40.98\pm1.14$ & $40.74\pm0.47$ \\
RBench-M & $29.17\pm1.20$ & $28.39\pm0.79$ \\
OlympiadBench-MM & $44.93\pm2.24$ & $44.67\pm2.45$ \\
\midrule
\textbf{Multimodal Avg.} & 45.84 & 45.93 \\
\midrule
\multicolumn{3}{c}{\textbf{Text Reasoning}} \\
\midrule
MATH-500 & $82.40\pm0.99$ & $81.80\pm0.65$ \\
MathVerse (Text Only) & $59.59\pm0.99$ & $59.80\pm0.91$ \\
MMLU-Pro & $47.15\pm0.05$ & $47.51\pm0.29$ \\
RBench-T & $38.59\pm1.19$ & $38.90\pm1.05$ \\
OlympiadBench-Text & $52.31\pm0.89$ & $52.11\pm0.71$ \\
\midrule
\textbf{Text Avg.} & 56.01 & 56.02 \\
\bottomrule
\end{tabular*}
\end{table}

Table~\ref{tab:support-mass-results} shows mixed changes across
individual benchmarks, with similar final averages: 45.84 versus 45.93
for multimodal reasoning and 56.01 versus 56.02 for text-only reasoning
at $m=0.95$ and $m=0.99$, respectively. Together with the training
curves, these results indicate limited sensitivity to increasing the
support mass from 0.95 to 0.99 in this configuration and support
retaining $m=0.95$ as the default.

\subsection{Approximation Accuracy of the Practical Coefficient}
\label{app:calibration-diagnostics}

Training approximates the exact KL matching in \eqref{eq:match} with
the second-order coefficient in \eqref{eq:local-coefficient}, computed
on a restricted support with a variance regularizer and a coefficient
cap as in \eqref{eq:practical-alpha}. We measure how closely this
practical coefficient matches a numerical solution of the KL-matching
equation on prefixes generated during training, in two settings.
Figure~\ref{fig:calibration_accuracy} in the main text composes the 4B
language and grounding experts, with the language expert's distribution
as the receiving prior $q_{L,t}$. Figure~\ref{fig:calibration_recon}
uses self-prior recomposition, which instead sets $q_{L,t}=q_{V,t}^{0}$
with the grounding VLM's own image-free prediction.

\noindent\textbf{Setup.}
Both settings use $\rho=1$ and the same calibration support and residual
stabilization as in student training. At each prefix, we construct the
stabilized residual $u_t$ as in Appendix~\ref{app:residual-stabilization}
and the support $S_t$ as in \eqref{eq:union} with $m=0.95$. For expert
composition, $S_t$ is the union of the language, image-conditioned, and
image-free nuclei. For self-prior recomposition, the language and
image-free branches coincide, leaving two distinct nuclei. All
distributions are renormalized on $S_t$ before computing KL. The source
KL uses the original image-conditioned and image-free predictions, and
the teacher KL measures the displacement of the composed teacher from
the receiving prior on $S_t$. Distillation itself uses the
full-vocabulary teacher.

\noindent\textbf{Compared coefficients.}
The practical coefficient $\widehat\alpha_{S,t}$ is computed by
\eqref{eq:practical-alpha}, including the variance regularizer and
coefficient cap. The numerical reference solves for the coefficient at
which the teacher KL equals the source KL, using the same residual,
receiving prior, and support. In Figures~\ref{fig:calibration_accuracy}
and~\ref{fig:calibration_recon}, panel (a) compares the two coefficients,
and panel (b) compares the resulting teacher KL with the source KL,
where the diagonal marks exact matching.

\noindent\textbf{Relation to exact self-prior recovery.}
With the raw residual, composing $q_{V,t}^{0}$ at $\alpha=1$ recovers
$q_{V,t}^{I}$, also after renormalization on a common support.
Residual stabilization changes the composition path, so the numerical
reference in Figure~\ref{fig:calibration_recon} can differ from one.
It therefore serves as a reference for the practical self-prior path.
Appendix~\ref{app:recomposed-training} evaluates the effect of this
practical construction on distillation.

\noindent\textbf{Prefixes and aggregation.}
The diagnostic uses 24,576 prefixes from 768 rollouts on 256
MMMU~\citep{yue2024mmmu} questions, generated by student checkpoints at
steps 0, 100, and 200. MMMU is separate from the MMMU-Pro benchmark used
for evaluation. The practical and numerical coefficients are evaluated on
the same prefixes. The figures report means within source-KL bins, with
pointwise 95\% confidence intervals from question-level bootstrap
resampling. Teacher-KL curves average per-prefix divergences.
Binned means can agree even when individual prefixes deviate, and the
upward deviations at larger source KL reflect finite-update error in the
practical approximation.

\begin{figure}[H]
\centering
\setlength{\abovecaptionskip}{3pt}
\begin{minipage}[t]{0.49\linewidth}
\centering
\includegraphics[width=\linewidth]{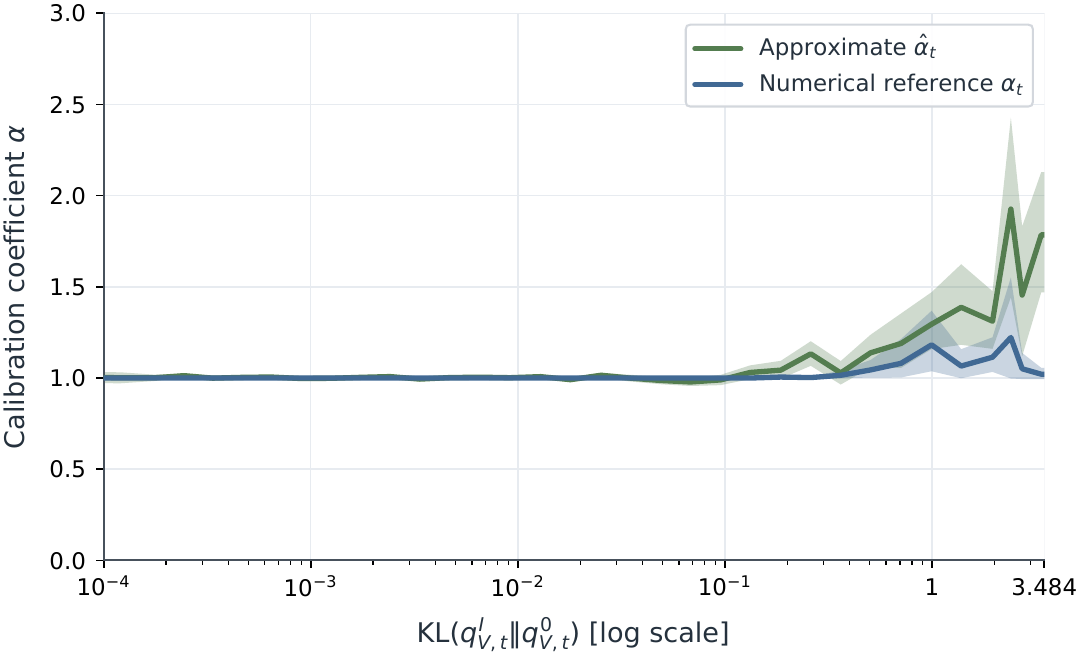}
\par\vspace{-3pt}
{\small (a) Calibration coefficient}
\end{minipage}
\hfill
\begin{minipage}[t]{0.49\linewidth}
\centering
\includegraphics[width=\linewidth]{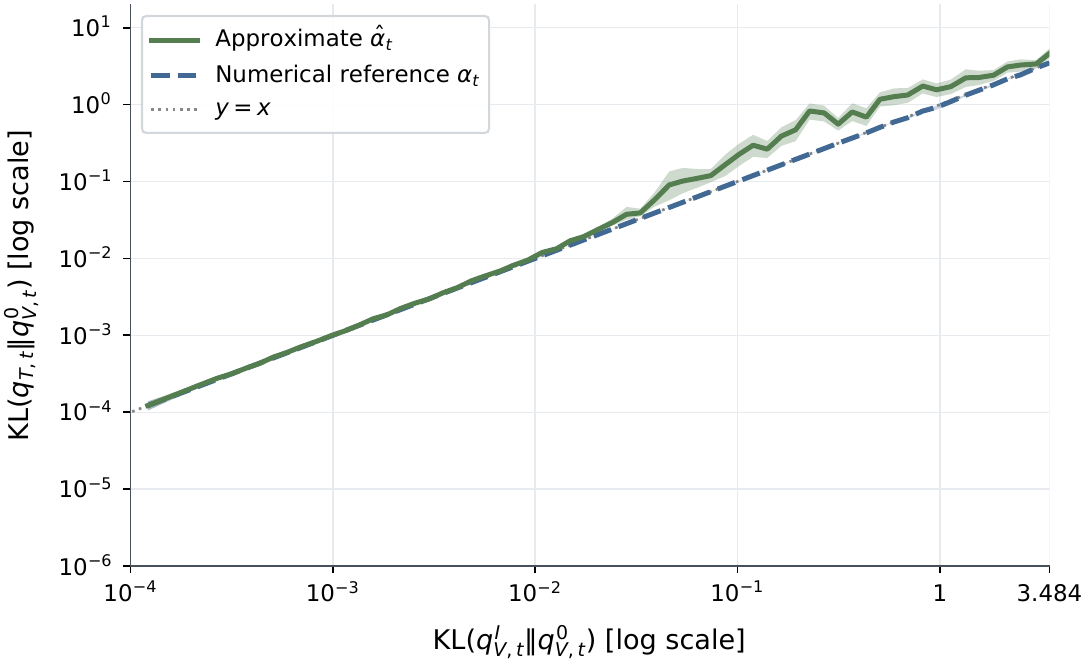}
\par\vspace{-3pt}
{\small (b) Induced teacher KL}
\end{minipage}
\caption{\textbf{Local approximation error in self-prior recomposition.}
The 4B VLM supplies both its own image-free prior and visual residual,
with $\rho=1$. Panels compare calibration coefficients (a) and teacher KL
relative to the VLM's image-free prior (b).
Curves show means within source-KL bins. The dotted $y=x$ line denotes
source-KL matching. Shaded bands indicate question-bootstrap 95\% confidence intervals.}
\label{fig:calibration_recon}
\end{figure}

\subsection{Distillation with a Recomposed VLM Teacher}
\label{app:recomposed-training}
\label{sec:approximation-analysis}

We define a \emph{recomposed teacher} by decomposing a single VLM into
its own image-free prior and visual residual and recombining them with
adaptive calibration at $\rho=1$. The receiving prior is the VLM's
image-free distribution, $q_{L,t}:=q_{V,t}^{0}$, and its
image-conditioned and image-free predictions supply the visual residual.
The recomposed teacher thus replaces the external language prior of the
main composition with the grounding expert's own prior. To validate the
practical implementation, we compare a student trained with this teacher
against standard OPD, which distills directly from the frozen
Qwen3-VL-4B-Instruct teacher.

All other settings match the main experiment, including the student
initialization, training data and budget, evaluation protocol, and the
practical construction in Appendix~\ref{app:implementation}
(residual stabilization, $m=0.95$, $\epsilon=10^{-8}$, and
$\alpha_{\max}=6$). The recomposed teacher is therefore
\begin{equation}
q_{T,t}^{\mathrm{recon}}
:=\operatorname{softmax}_{\V}
\bigl(z_{V,t}^{0}+\widehat\alpha_{S,t}u_t\bigr),
\label{eq:app-recomposed-teacher}
\end{equation}
where the coefficient statistics use $q_{V,t}^{0}$ as the receiving
prior. The direct-teacher condition uses $q_{V,t}^{I}$ as its target.

For the raw residual and exact calibration, this self-prior construction
recovers the original image-conditioned teacher at $\rho=1$
(Appendix~\ref{app:special-cases}). The training comparison instead
evaluates the complete practical recomposition, including coefficient
approximation and stabilization. Its performance difference from
standard OPD therefore reflects their combined effect.

\noindent\textbf{Distillation results.}
Table~\ref{tab:approximation_sanity} reports the complete reasoning results.
The recomposed teacher obtains similar aggregate performance to direct
VLM supervision (40.18 vs.\ 40.37 for multimodal reasoning and
51.18 vs.\ 51.72 for text reasoning), with mixed changes across benchmarks.
This comparison supports the usefulness of the practical teacher for
distillation, while not establishing equivalence or attributing individual
gains to larger visual updates. Coefficient approximation and residual
stabilization are evaluated together, so their separate training effects
cannot be isolated by this comparison.

\begingroup
\begin{table}[H]
\centering
\caption{\textbf{Practical recomposition: downstream distillation.}
Both conditions train the same 2B student for 200 steps using the 4B VLM.
Recomposition uses the VLM's own image-free prior, stabilized visual
residual, and approximate calibration at $\rho=1$.
Scores are mean $\pm$ SD over five evaluation sampling seeds.
$\Delta$ is recomposed minus direct-supervision accuracy in pp.
Averages report means only.}
\label{tab:approximation_sanity}
\footnotesize
\setlength{\tabcolsep}{3pt}
\renewcommand{\arraystretch}{1.12}
\renewcommand{\theadfont}{\footnotesize\bfseries}
\renewcommand{\theadgape}{\Gape[1pt]}
\begin{tabular*}{\linewidth}{@{\extracolsep{\fill}}lccr@{}}
\toprule
\textbf{Benchmark} & \thead{Direct VLM\\supervision} & \thead{Recomposed\\teacher} & $\Delta$ (pp) \\
\midrule
\multicolumn{4}{c}{\textbf{Multimodal Reasoning}} \\
\midrule
MATH-Vision & $27.43\pm0.89$ & $28.62\pm1.73$ & $+1.19$ \\
MathVerse (Total) & $52.70\pm0.76$ & $53.43\pm0.38$ & $+0.73$ \\
MathVista & $68.60\pm0.39$ & $68.34\pm1.28$ & $-0.26$ \\
MMMU-Pro & $36.98\pm0.89$ & $36.72\pm0.51$ & $-0.26$ \\
RBench-M & $16.90\pm0.52$ & $13.86\pm1.27$ & $-3.04$ \\
OlympiadBench-MM & $39.60\pm0.37$ & $40.13\pm2.72$ & $+0.53$ \\
\midrule
\textbf{Multimodal Avg.} & 40.37 & 40.18 & $-0.19$ \\
\midrule
\multicolumn{4}{c}{\textbf{Text Reasoning}} \\
\midrule
MATH-500 & $78.72\pm0.82$ & $78.72\pm1.85$ & $+0.00$ \\
MathVerse (Text Only) & $57.44\pm0.82$ & $56.80\pm1.19$ & $-0.64$ \\
MMLU-Pro & $45.52\pm0.16$ & $45.62\pm0.33$ & $+0.10$ \\
RBench-T & $28.59\pm1.06$ & $28.01\pm1.05$ & $-0.58$ \\
OlympiadBench-Text & $48.31\pm0.70$ & $46.77\pm1.98$ & $-1.54$ \\
\midrule
\textbf{Text Avg.} & 51.72 & 51.18 & $-0.54$ \\
\bottomrule
\end{tabular*}
\end{table}
\endgroup

\section{Supplementary Experiments}
\label{app:supplementary-experiments}

\subsection{Main Results and Decoding Variants}
\label{app:main-results}

\subsubsection{Sampled Decoding}
\label{app:seed-results}

Table~\ref{tab:main-seed-results} adds evaluation SDs to the reasoning
results in Table~\ref{tab:qwen3vl_main}. Scores are accuracies (\%),
reported using the five-seed evaluation protocol in
Appendix~\ref{app:evaluation}.
For LEGO-OPD, this table reports $\rho=1$; results at $\rho=2$
with SDs are provided in Appendix~\ref{app:calibration-results}.

\begingroup
\begin{table}[H]
\centering
\caption{\textbf{Main reasoning results with evaluation variability.}
Scores are accuracies (\%, mean $\pm$ SD across five sampling seeds).
Base 2B is the initial student and Base 4B is the original VLM reference.
All distilled models are 2B students evaluated at step 200.
Averages report means only.}
\label{tab:main-seed-results}
\footnotesize
\setlength{\tabcolsep}{3pt}
\renewcommand{\arraystretch}{1.12}
\renewcommand{\theadfont}{\footnotesize\bfseries}
\renewcommand{\theadgape}{\Gape[1pt]}
\resizebox{\linewidth}{!}{%
\begin{tabular}{@{}lccccccc@{}}
\toprule
\textbf{Benchmark} & \thead{Base\\4B} & \thead{Base\\2B} & \thead{Standard\\OPD} & \thead{VA-OPD} & \thead{M-OPD} & \thead{H-OPD} & \thead{LEGO-OPD\\$\rho=1$} \\
\midrule
\multicolumn{8}{c}{\textbf{Multimodal Reasoning}} \\
\midrule
MATH-Vision & $39.74\pm2.25$ & $27.96\pm1.68$ & $27.43\pm0.89$ & $25.33\pm1.12$ & $24.41\pm1.73$ & $32.37\pm1.83$ & $33.36\pm2.56$ \\
MathVerse (Total) & $66.70\pm0.51$ & $51.51\pm0.48$ & $52.70\pm0.76$ & $51.66\pm0.36$ & $51.28\pm0.53$ & $55.41\pm0.65$ & $56.96\pm0.20$ \\
MathVista & $76.26\pm0.59$ & $66.56\pm0.71$ & $68.60\pm0.39$ & $67.36\pm0.93$ & $66.42\pm1.00$ & $67.70\pm0.79$ & $69.62\pm0.72$ \\
MMMU-Pro & $50.29\pm0.85$ & $37.40\pm0.66$ & $36.98\pm0.89$ & $35.11\pm1.16$ & $35.06\pm0.85$ & $39.36\pm0.69$ & $40.98\pm1.14$ \\
RBench-M & $32.81\pm1.34$ & $19.61\pm0.95$ & $16.90\pm0.52$ & $16.81\pm0.62$ & $15.67\pm1.30$ & $26.92\pm1.59$ & $29.17\pm1.20$ \\
OlympiadBench-MM & $63.20\pm2.02$ & $37.60\pm3.42$ & $39.60\pm0.37$ & $37.33\pm1.70$ & $36.53\pm1.66$ & $39.07\pm3.08$ & $44.93\pm2.24$ \\
\midrule
\textbf{Multimodal Avg.} & 54.83 & 40.11 & 40.37 & 38.93 & 38.23 & 43.47 & 45.84 \\
\midrule
\multicolumn{8}{c}{\textbf{Text Reasoning}} \\
\midrule
MATH-500 & $93.04\pm0.55$ & $80.44\pm0.95$ & $78.72\pm0.82$ & $80.20\pm0.69$ & $80.56\pm0.77$ & $81.88\pm1.08$ & $82.40\pm0.99$ \\
MathVerse (Text Only) & $66.80\pm0.17$ & $57.18\pm1.01$ & $57.44\pm0.82$ & $56.65\pm1.01$ & $56.19\pm0.79$ & $59.44\pm0.39$ & $59.59\pm0.99$ \\
MMLU-Pro & $62.90\pm0.14$ & $43.59\pm0.17$ & $45.52\pm0.16$ & $45.09\pm0.09$ & $44.98\pm0.30$ & $46.94\pm0.17$ & $47.15\pm0.05$ \\
RBench-T & $51.92\pm0.82$ & $30.20\pm0.31$ & $28.59\pm1.06$ & $28.76\pm0.52$ & $28.28\pm1.01$ & $37.17\pm1.12$ & $38.59\pm1.19$ \\
OlympiadBench-Text & $69.41\pm0.95$ & $48.04\pm1.42$ & $48.31\pm0.70$ & $48.99\pm0.73$ & $47.77\pm1.13$ & $52.40\pm0.95$ & $52.31\pm0.89$ \\
\midrule
\textbf{Text Avg.} & 68.81 & 51.89 & 51.72 & 51.94 & 51.56 & 55.57 & 56.01 \\
\bottomrule
\end{tabular}%
}
\end{table}
\endgroup

\subsubsection{Greedy Decoding}
\label{app:greedy-results}

Table~\ref{tab:greedy-results} reports greedy-decoding results for the
initial 2B student, the 4B VLM reference, and the distilled 2B students
at step 200. All evaluation settings are identical to those used for
sampling-based evaluation, except for the sampling parameters.
Standard OPD denotes distillation from a single VLM.
LEGO-OPD uses the 4B language and grounding experts at $\rho\in\{1,2\}$.
Under greedy decoding, LEGO-OPD at $\rho=2$ achieves the highest
multimodal and text reasoning averages among the evaluated 2B students.

\begingroup
\begin{table}[H]
\centering
\caption{\textbf{Greedy decoding results across reasoning benchmarks.}
Scores are accuracies (\%). Base 2B and Base 4B are the original
Qwen3-VL-Instruct models without distillation. All distilled students
are evaluated at step 200. Averages are unweighted means over the six
multimodal or five text benchmarks. Bold indicates the best score
among the 2B students, excluding the 4B reference.}
\label{tab:greedy-results}
\footnotesize
\setlength{\tabcolsep}{2pt}
\renewcommand{\arraystretch}{1.08}
\renewcommand{\theadfont}{\scriptsize\bfseries}
\renewcommand{\theadgape}{\Gape[1pt]}
\resizebox{\linewidth}{!}{%
\begin{tabular*}{\textwidth}{@{\extracolsep{\fill}}l*{8}{c}@{}}
\toprule
& \thead{4B\\reference} & \thead{Initial\\student} & \multicolumn{4}{c}{\textbf{Distillation baselines}} & \multicolumn{2}{c}{\textbf{LEGO-OPD}} \\
\cmidrule(lr){2-2}\cmidrule(lr){3-3}\cmidrule(lr){4-7}\cmidrule(lr){8-9}
\textbf{Benchmark} & \thead{Base 4B} & \thead{Base 2B} & \thead{Standard\\OPD} & \thead{VA-OPD} & \thead{M-OPD} & \thead{H-OPD} & $\rho=1$ & $\rho=2$ \\
\midrule
Step & -- & -- & 200 & 200 & 200 & 200 & 200 & 200 \\

\midrule
\multicolumn{9}{c}{\textbf{Multimodal Reasoning}} \\
\midrule
MATH-Vision & 31.91 & 8.88 & 13.82 & 11.84 & 11.18 & 19.74 & 25.66 & \textbf{28.95} \\
MathVerse (Total) & 62.77 & 38.40 & 39.80 & 40.18 & 39.57 & 47.11 & \textbf{53.78} & 53.43 \\
MathVista & 75.10 & 58.70 & 60.20 & 60.70 & 62.50 & 65.30 & 68.30 & \textbf{70.60} \\
MMMU-Pro & 44.22 & 27.17 & 24.80 & 27.40 & 26.42 & 32.02 & 36.99 & \textbf{37.34} \\
RBench-M & 14.14 & 2.41 & 2.86 & 2.41 & 3.16 & 7.37 & \textbf{18.35} & 15.79 \\
OlympiadBench-MM & 64.00 & 26.00 & 32.00 & 29.33 & 28.67 & \textbf{41.33} & 40.00 & 40.00 \\
\midrule
\textbf{Multimodal Avg.} & 48.69 & 26.93 & 28.91 & 28.64 & 28.58 & 35.48 & 40.51 & \textbf{41.02} \\
\midrule
\multicolumn{9}{c}{\textbf{Text Reasoning}} \\
\midrule
MATH-500 & 90.40 & 64.40 & 71.60 & 70.20 & 72.60 & 79.00 & 80.40 & \textbf{82.00} \\
MathVerse (Text Only) & 63.07 & 38.71 & 45.81 & 45.81 & 43.53 & 53.05 & 57.23 & \textbf{58.50} \\
MMLU-Pro & 38.75 & 43.26 & 24.72 & 38.22 & 38.11 & 36.69 & 44.82 & \textbf{46.07} \\
RBench-T & 33.27 & 6.58 & 9.14 & 10.42 & 9.41 & 20.66 & 28.43 & \textbf{29.16} \\
OlympiadBench-Text & 63.95 & 33.68 & 40.80 & 39.61 & 39.91 & 46.88 & 48.52 & \textbf{48.66} \\
\midrule
\textbf{Text Avg.} & 57.89 & 37.33 & 38.41 & 40.85 & 40.71 & 47.26 & 51.88 & \textbf{52.88} \\
\bottomrule
\end{tabular*}%
}
\end{table}
\endgroup

\subsection{Expert Configurations and Factor Contributions}
\label{app:additional-experts}

\noindent\textbf{Models and comparison settings.}
LLM 1.7B and LLM 4B denote Qwen3-1.7B and Qwen3-4B-Instruct-2507.
VLM 2B/4B/8B denote the corresponding Qwen3-VL-Instruct models.
All combinations train the same 2B student for 200 steps with $\rho=1$
and the practical settings in Appendix~\ref{app:implementation}.
A VLM used as the language expert receives no image.
The grounding expert supplies both image-conditioned and image-free
predictions for the visual residual.
Table~\ref{tab:expert-seed-results} provides the full reasoning results
with evaluation SDs for Tables~\ref{tab:expert_combinations_full}
and~\ref{tab:expert_contributions}. Shared configurations are reported
once. The 4B VLM self-recomposition condition is reported with the calibration analysis in Appendix~\ref{app:recomposed-training}.

\noindent\textbf{Model selection.}
The 1.7B language model and 2B/8B grounding models vary one expert while
the other remains fixed at 4B. Checkpoint generation and inference mode
also affect expert strength: Qwen3-4B-Instruct-2507 reports higher
non-thinking AIME25 and ZebraLogic scores than the original Qwen3-8B
\citep{yang2025qwen3,qwen2025qwen3_4b_instruct2507}.
Thus, a larger parameter count alone does not establish that the original
8B language model is a stronger replacement for the updated 4B checkpoint.

\noindent\textbf{Expert-factor substitution.}
To examine the contributions of the two factor sources, we replace
either the grounding expert or the language prior with the corresponding
factor from the frozen initial student, as described in
Section~\ref{sec:expert-contributions}. Grounding-expert substitution
uses the frozen student's image-conditioned and image-free predictions
to construct the visual residual. Language-prior substitution instead
uses the frozen student's image-free prediction as the receiving prior.
The remaining expert is retained, and the calibration statistics are
computed from the resulting factor sources.

\noindent\textbf{Benchmark subset for the compact factor comparison.}
\label{app:ablation-benchmarks}
The compact comparison in Table~\ref{tab:expert_contributions} uses
MATH-Vision, MathVerse, and OlympiadBench for multimodal evaluation.
MATH-Vision represents visual mathematical reasoning and is also
used in the baseline work. MathVerse provides fine-grained input
variants, while OlympiadBench adds challenging olympiad-level reasoning.
For text-only evaluation, we use the corresponding MathVerse and
OlympiadBench benchmarks to compare reasoning across modalities
within the same benchmark families.

\begingroup
\begin{table}[H]
\centering
\caption{\textbf{Expert combinations with evaluation variability.}
Column headers list the language expert above the grounding expert.
The shaded Base column uses the 4B language and grounding experts.
VLM 2B is the frozen initial student and receives no image when used
as the language expert. Scores are mean $\pm$ SD over five evaluation
sampling seeds. Averages report means only.}
\label{tab:expert-seed-results}
\footnotesize
\setlength{\tabcolsep}{3pt}
\renewcommand{\arraystretch}{1.12}
\renewcommand{\theadfont}{\footnotesize\bfseries}
\renewcommand{\theadgape}{\Gape[1pt]}
\resizebox{0.9\linewidth}{!}{%
\begin{tabular}{@{}lcc>{\columncolor{black!6}}ccc@{}}
\toprule
& \multicolumn{2}{c}{\textbf{Language expert swap}} & \textbf{Base} & \multicolumn{2}{c}{\textbf{Grounding expert swap}} \\
\cmidrule(lr){2-3}\cmidrule(lr){5-6}
\textbf{Benchmark} & \thead{LLM 1.7B\\VLM 4B} & \thead{VLM 2B\\VLM 4B} & \thead{LLM 4B\\VLM 4B} & \thead{LLM 4B\\VLM 2B} & \thead{LLM 4B\\VLM 8B} \\
\midrule
\multicolumn{6}{c}{\textbf{Multimodal Reasoning}} \\
\midrule
MATH-Vision & $28.36\pm2.84$ & $26.58\pm1.20$ & $33.36\pm2.56$ & $33.03\pm1.35$ & $34.34\pm1.77$ \\
MathVerse (Total) & $43.89\pm0.27$ & $52.69\pm0.54$ & $56.96\pm0.20$ & $54.17\pm0.65$ & $57.81\pm0.40$ \\
MathVista & $64.96\pm0.89$ & $68.20\pm0.47$ & $69.62\pm0.72$ & $69.42\pm1.13$ & $70.24\pm0.56$ \\
MMMU-Pro & $35.11\pm0.63$ & $36.36\pm0.77$ & $40.98\pm1.14$ & $41.93\pm0.69$ & $41.70\pm1.03$ \\
RBench-M & $22.71\pm1.48$ & $17.53\pm0.81$ & $29.17\pm1.20$ & $28.48\pm1.95$ & $28.90\pm1.40$ \\
OlympiadBench-MM & $36.00\pm2.21$ & $37.20\pm3.25$ & $44.93\pm2.24$ & $43.07\pm3.55$ & $45.20\pm1.28$ \\
\midrule
\textbf{Multimodal Avg.} & 38.51 & 39.76 & 45.84 & 45.02 & 46.37 \\
\midrule
\multicolumn{6}{c}{\textbf{Text Reasoning}} \\
\midrule
MATH-500 & $81.52\pm0.89$ & $79.60\pm0.85$ & $82.40\pm0.99$ & $82.36\pm0.57$ & $84.08\pm1.81$ \\
MathVerse (Text Only) & $55.03\pm1.84$ & $56.98\pm0.68$ & $59.59\pm0.99$ & $59.92\pm0.61$ & $59.59\pm0.95$ \\
MMLU-Pro & $43.14\pm0.41$ & $44.91\pm0.18$ & $47.15\pm0.05$ & $47.51\pm0.35$ & $46.92\pm0.31$ \\
RBench-T & $33.05\pm1.36$ & $29.87\pm1.33$ & $38.59\pm1.19$ & $38.46\pm1.30$ & $37.61\pm0.90$ \\
OlympiadBench-Text & $47.57\pm1.53$ & $48.07\pm0.87$ & $52.31\pm0.89$ & $52.67\pm1.35$ & $53.06\pm0.99$ \\
\midrule
\textbf{Text Avg.} & 52.06 & 51.89 & 56.01 & 56.18 & 56.25 \\
\bottomrule
\end{tabular}%
}
\end{table}
\endgroup

\subsection{Decomposed OPD Reproduction Attempt}
\label{app:decomposed-reproduction}

\noindent\textbf{Implementation and results.}
At the time of our experiments, no official implementation of
Decomposed OPD~\citep{yoon2026decomposedopd} was available.
We therefore implemented the method based on the paper.
However, we were unable to reproduce its reported improvements
in our experimental setting. We report this attempt separately
from the main baseline comparisons.

We use Qwen3-VL-2B-Instruct as the student and
Qwen3-VL-4B-Instruct as the teacher, with the training data and
evaluation protocol described in Section~\ref{sec:experimental-setup}.
The teacher is evaluated with and without the image.
Our implementation includes the standard distillation,
visual-steering, and language-preservation terms, with
$\gamma=2.0$, $\lambda=0.01$, and $\eta=0.41$.
The language-preservation mask uses the per-response 70th
percentile of the teacher visual-dependence score.
All distillation terms are computed over the full vocabulary,
whereas our Standard OPD baseline uses the student-top-16 surrogate.

Table~\ref{tab:decomposed-reproduction} reports the results.
Our implementation obtains lower average scores than Standard OPD
on both the multimodal and text-only benchmarks.
These results describe our implementation under the adapted setup
and do not establish the performance of the original method
under its original experimental conditions.

\begin{table}[H]
\centering
\caption{
Results of our Decomposed OPD reproduction attempt at step 200.
Scores are accuracies (\%). Reasoning scores report mean $\pm$ SD.
Perception scores use greedy decoding. Benchmark averages report means
only, computed as unweighted means over six multimodal or five text benchmarks.
Decomposed OPD denotes our paper-based implementation.
}
\label{tab:decomposed-reproduction}
\small
\setlength{\tabcolsep}{8pt}
\begin{tabular*}{\linewidth}{@{\extracolsep{\fill}}lcc@{}}
\toprule
Benchmark
& Standard OPD
& \makecell{Decomposed OPD\\(our implementation)} \\
\midrule
\multicolumn{3}{c}{\textbf{Multimodal reasoning}} \\
\midrule
MATH-Vision           & $27.43\pm0.89$ & $25.59 \pm 1.82$ \\
MathVerse (Total)     & $52.70\pm0.76$ & $50.80 \pm 0.30$ \\
MathVista            & $68.60\pm0.39$ & $63.94 \pm 1.28$ \\
MMMU-Pro             & $36.98\pm0.89$ & $34.52 \pm 0.56$ \\
RBench-M             & $16.90\pm0.52$ & $10.20 \pm 0.91$ \\
OlympiadBench (MM)   & $39.60\pm0.37$ & $35.60 \pm 1.74$ \\
\midrule
Multimodal reasoning Avg. & 40.37 & 36.78 \\
\midrule
\multicolumn{3}{c}{\textbf{Text-only reasoning}} \\
\midrule
MATH-500              & $78.72\pm0.82$ & $79.36 \pm 1.23$ \\
MathVerse (Text Only) & $57.44\pm0.82$ & $56.24 \pm 1.23$ \\
MMLU-Pro              & $45.52\pm0.16$ & $43.64 \pm 0.23$ \\
RBench-T              & $28.59\pm1.06$ & $22.23 \pm 1.07$ \\
OlympiadBench (Text)  & $48.31\pm0.70$ & $46.56 \pm 0.68$ \\
\midrule
Text-only reasoning Avg. & 51.72 & 49.61 \\
\midrule
\multicolumn{3}{c}{\textbf{Visual perception}} \\
\midrule
BLINK         & 52.13 & 51.60 \\
QBench1,2 & 69.54 & 69.18 \\
\bottomrule
\end{tabular*}
\end{table}

The relation between its auxiliary visual objective and LEGO-OPD's
loss decomposition is derived in Appendix~\ref{app:loss-decomposition}.

\section{Limitations}
\label{sec:limitations}

Although LEGO-OPD calibrates how strongly visual evidence modifies
the language prior, it does not determine whether that evidence is
reliable. Source KL measures the magnitude of the grounding expert's
visual update rather than its correctness, so calibration can amplify
erroneous evidence.

Beyond calibration, the visual residual represents a conditional
model contrast rather than an isolated visual factor, because
generated prefixes may already contain information inferred from
the image. In addition, the current implementation requires a shared
tokenization and output vocabulary, restricting composition to
compatible models within the same family.

\end{document}